\numberwithin{equation}{section}
\newtheorem{theorem}{Theorem}[section]
\newtheorem{lemma}[theorem]{Lemma}
\theoremstyle{plain}
\newtheorem*{claim*}{Claim}
\newtheorem{thm}{Theorem}[section]
\newtheorem{dfn}{Definition}[section]
\newtheorem{assumption}{Assumption}[section]
\def\<{\langle}
\def\>{\rangle}
\title{Gradient-enhanced deep neural network approximations}
\author{{Xiaodong Feng} \\
	\texttt{xdfeng@lsec.cc.ac.cn} \\
	\And
	{Li Zeng} \\
	\texttt{zengli@lsec.cc.ac.cn} 
	\And{}\\
		LSEC, Institute of Computational Mathematics and Scientific/Engineering
	Computing,\\
	 AMSS, Chinese Academy of Sciences, Beijing, China.\\
}
\begin{document}
	\maketitle
\begin{abstract}

We propose in this work the gradient-enhanced deep neural networks (DNNs) approach for function approximations and uncertainty quantification. More precisely, the proposed approach adopts both the function evaluations and the associated gradient information to yield enhanced approximation accuracy. In particular, the gradient information is included as a regularization term in the gradient-enhanced DNNs approach, for which we present similar posterior estimates (by the two-layer neural networks) as those in the path-norm regularized DNNs approximations. We also discuss the application of this approach to gradient-enhanced uncertainty quantification, and present several numerical experiments to show that the proposed approach can outperform the traditional DNNs approach in many cases of interests.
\end{abstract}
\keywords{Deep neural networks \and Two-layer neural network\and Barron space\and uncertainty quantification}
	
\section{Introduction}
In recent years, deep neural networks (DNNs) have been widely used for dealing with scientific and engineer problems, such as function approximations \cite{siegel2020approximation, schwab2019deep, weinan2022barron}, numerical partial differential equations (PDEs) \cite{sirignano2018dgm, weinan2018deep, raissi2019physics}, image classification \cite{he2016deep, litjens2017survey} and uncertainty quantification \cite{qin2021deep, meng2020composite, yang2021b}, to name a few.
Compared to traditional tools such as polynomials \cite{devore1993constructive}, radial basis functions \cite{majdisova2017radial} and kernel methods \cite{liu2020random}, one of the main advantages of DNNs is its potential approximation capacity for high dimensional problems. Unlike classic tools such as polynomial approximations (for which the relevant theoretical analysis results have been well studied), the associated theoretical analysis for DNNs is still in its infancy. Among others, we mention the seminal works \cite{barron1993universal, weinan2022barron} where the concept of "Barron space" was proposed, and some approximation results for DNNs were presented.

In this work, we shall present the gradient-enhanced DNNs approach. More precisely, our approach adopts both the function evaluations and the associated gradient information. This is similar as in the classic Hermite type interpolation. Our main contributions are summarized as follows:
\begin{itemize}
	\item We present the gradient-enhanced DNNs approach, where the gradient information is included as a regularization term.
	\item For the gradien-enhanced DNNs approach, we present similar posterior estimates (via a two-layer neural network) as those in the path-norm regularized DNNs approximations. More precisely, we show that the posterior generalization error can be bounded by $\mathcal{O}\Big(d\|\bm{\theta}\|_{\mathcal{P}}\sqrt{\frac{\ln(2d)}{n}}+\|\bm{\theta}\|_{\mathcal{P}}\frac{\ln(\|\bm{\theta}\|_{\mathcal{P}})}{\sqrt{n}}\Big),$ where $n$ is the number of training points, $\Vert \bm{\theta}\Vert _{\mathcal{P}}$ is the path norm and $d$ is the dimension.
	
	\item We discuss the application of our approach to gradient-enhanced uncertainty quantification, and present several numerical experiments to show that the gradient-enhanced DNNs approach can outperform the traditional DNNs in many cases of interests.
\end{itemize}
We remark that gradient-enhanced polynomial approximations have been proposed for uncertainty quantification \cite{Ling2018A, 2015Enhancing, 2011ORTHOGONAL, 2013Gradient, 2016On}. Meanwhile, a gradient-enhanced physics-informed neural networks was proposed to improve the accuracy and training efficiency of PINNs in \cite{yu2022gradient}, where the gradient information (i.e. the associated adjoint equation) is included to yield a modified PINNs loss function.

The rest of this paper is organized as follows. In Section 2, we set up the problem and present some preliminaries. In Section 3 we present the error estimations in two-layer neural network for gradient regularized approximation problem. Applications to uncertainty quantification are discussed in Section 4. Finally, we give some concluding remarks in Section 5.

\section{Preliminaries}
\subsection{Problem setup}
We begin the discussion by considering function approximations via DNNs with labeled data. In particular, we consider the target function $f^*:\mathbb{R}^d\to\mathbb{R}.$  And we assume that the following data are available: $\{\bm{x}_i,y_i, \bm{y}'_i\}_{i=1}^n$.
Here $y_i$ and $\bm{y}'_i$ are the functional evaluations and gradient evaluations, respectively. Namely,
\begin{equation}
	\left\{\begin{split}
		y_i &= f^*(\bm{x}_i), \quad i=1,...,n.\\
		\bm{y}'_i &=\nabla f^*(\bm{x}_i), \quad i=1,...,n.
	\end{split}
	\right.
\end{equation}
For simplicity, we assume that the data $\{\bm{x}_i\}_i$ lie in $\mathcal{X}=[-1,1]^d$ and $0\leq f^*\leq1$.  We shall show our analysis results via a two-layer neural network, for which the nonlinear function can be defined as:
\begin{equation}
	f(\bm{x};\bm{\theta})=\sum\limits_{i=1}^ma_k\sigma(\bm{w}_k^T\bm{x}),
	\label{model}
\end{equation}
where $\bm{w}_k\in\mathbb{R}^d, \sigma:\mathbb{R}\to\mathbb{R}$ is a nonlinear activation function, and $\bm{\theta}=\{(a_k, \bm{w}_k)\}^m_{k=1}$ is the unknown parameter. We define a truncated form of $f(\bm{x};\bm{\theta})$ through
$$T f(\bm{x};\bm{\theta}) = \max\big\{\min\{f(\bm{x};\bm{\theta}),1\},0\big\}.$$
 By an abuse of notation, in the following
we still use $f(\bm{x};\bm{\theta})$ to denote $T f(\bm{x};\bm{\theta})$. For the training $\{\bm{x}_i, y_i\}_{i=1}^n$, the population risk can be defined by
\begin{equation}
	L(\bm{\theta})=\mathbb{E}_{\bm{x},y}\big[\ell(f(\bm{x};\bm{\theta}),y)\big].
	\label{risk}
\end{equation}
The discrete empirical risk with the training data yields
\begin{equation}
	L_n(\bm{\theta})=\frac{1}{n}\sum\limits_{i=1}^n \ell(f(\bm{x}_i;\bm{\theta}),y_i),
	\label{emperical risk}
\end{equation}
where $\ell(f(\bm{x}),y)=\frac{1}{2}(f(\bm{x})-y)^2$. For the gradient information, we consider
\begin{equation}
L'(\bm{\theta})=\mathbb{E}_{\bm{x},y}\big[\tilde{\ell}(\nabla f(\bm{x};\bm{\theta}),\bm{y}')\big].
\label{dL}
\end{equation}
where $\tilde{\ell}(\nabla f(\bm{x};\bm{\theta}), \bm{y}')=\big\|\nabla f(\bm{x};\bm{\theta})-\bm{y}'\big\|_2$ and $\big\|\cdot\big\|_q$ indicates the $\ell_q$ norm of a vector.
Similarly, the discrete empirical risk with the training data $\{\bm{x}_i, y_i,\bm{y}'_i\}_{i=1}^n$ yields
\begin{equation}
 {L}'_n(\bm{\theta})=\frac{1}{n}\sum\limits_{i=1}^n\big[\tilde{\ell}(\nabla f(\bm{x}_i;\bm{\theta}), \bm{y}'_i)\big]^2.
	\label{dLn}
\end{equation}

Moreover, the path norm of the two-layer neural network is defined as:
	\begin{equation}
	\|\bm{\theta}\|_{\mathcal{P}}:=\sum\limits_{k=1}^m|a_k|\|\bm{w}_k\|_1.
\end{equation}
We are new ready to present the so called gradient-enhanced DNNs approach.

\begin{dfn}[\bf{Gradient-enhanced DNNs model}]
	
	For a two-layer neural network $f(\cdot;\bm{\theta})$ of width $m$,  the gradient regularized risk is defined as follows:
	$$J_{n,\beta}(\bm{\theta}):=L_n(\bm{\theta})+\beta\cdot L'_n(\bm{\theta}).$$
	The corresponding regularized estimator is defined as
	$$\bm{\theta}_{n,\beta}=\arg\min J_{n,\beta}(\bm{\theta}).$$
	Note that the minimizers is not necessarily unique, and $\bm{\theta}_{n,\beta}$ should be understood as any of the minimizers.
\end{dfn}
The above approach can be viewed as an extension of the classic Hermite interpolation \cite{spitzbart1960generalization, zongmin1992hermite}, and is motivated by applications such as gradient-enhanced uncertainty quantification \cite{Ling2018A, 2015Enhancing, 2011ORTHOGONAL, 2013Gradient, 2016On}.
\subsection{Barron space}
We now provide a brief overview of Barron space \cite{barron}. Let $\mathbb{S}^{d-1}:=\big\{\bm{w}\in \mathbb{R}^{d}\mid \|\bm{w}\|_1=1\big\}.$ Let $\mathcal{F}$ be the Borel $\sigma$-algebra on $\mathbb{S}^{d-1}$ and $\mathbb{P}(\mathbb{S}^{d-1})$ be the collection of the probability measures on $\big(\mathbb{S}^{d-1},\mathcal{F}\big)$. Let $\mathcal{B}(\mathcal{X})$ be the collection of functions that admit the following integral representation:
\begin{equation}
	f(\bm{x})=\int_{\mathbb{S}^{d-1}}a(\bm{w})\sigma(\<\bm{w},\bm{x}\>)\mathrm{d}\bm{\pi}(\bm{w}),\quad \forall \bm{x}\in \mathcal{X},
\end{equation}
where $\bm{\pi}\in \mathbb{P}(\mathbb{S}^{d-1})$, and $a(\cdot)$ is a measurable function with respect to $(\mathbb{S}^d,\mathcal{F})$. For any $f\in\mathcal{B}(\mathcal{X})$ and $p\geq1$, we define the following norm
\begin{equation}
	\gamma_p(f):=\inf\limits_{(a,\bm{\pi})\in\Theta_f}\left(\int_{\mathbb{S}^{d-1}}|a(\bm{w})|^p\mathrm{d}\bm{\pi}(\bm{w})\right)^{1/p},
\end{equation}
where
$$\Theta_f=\left\{(a,\bm{\pi})\,\Big|\, f(\bm{x})=\int_{\mathbb{S}^{d-1}}a(\bm{w})\sigma(\<\bm{w},\bm{x}\>)\mathrm{d}\bm{\pi}(\bm{w})\right\}.$$

\begin{dfn}[\bf{Barron space} \cite{barron}]
	The Barron space is defined as
	$$\mathcal{B}_p(\mathcal{X}):=\Big\{f\in\mathcal{B}(\mathcal{X})\,\big|\, \gamma_p(f)<\infty\Big\}.$$
\end{dfn}

We next present several assumptions.
\begin{assumption}\label{assumption}
		Throughout the paper we assume that
		\begin{itemize}
		\item $\mathcal{X}=[-1,1]^d$ and $0\leq f^*\leq1$.
		\item The derivative of ${f^*}(\bm{x})$ is bounded by a constant $D$.
		\item The activation function $\sigma$ is scaling invariant, namely $\sigma(k\bm{x})=k\sigma(\bm{x})$, and satisfies $|\sigma(\bm{x})|\leq C_1|\bm{x}|, |\sigma'|\leq C_2$ and $\sigma'$ is Lipschitz continuous with a positive constant $C_3$. In particular, we use $ReLU$ as activation function with constant $C_1=C_2=C_3=1$.
		\item $\ln(2d)\geq 1$, here $d$ is the dimension of input data.
	\end{itemize}
\end{assumption}
\section{Error estimates for the gradient-enhanced DNNs approach}

In this section, we present the error estimates for the gradient-enhanced DNNs approach. To this aim, we first present the following Theorem.
\begin{thm}
	\label{appro_thm}
	For any $f\in\mathcal{B}_2(\mathcal{X})$, there exists a two-layer neural network $f(\cdot;\tilde{\bm{\theta}})$ of width $m$, such that
	\begin{align}
		\mathbb{E}_{\bm{x}}\left[\big(f(\bm{x})-f(\bm{x};\tilde{\bm{\theta}})\big)^2\right]\leq\frac{3\gamma_2^2(f)}{m}\label{appro_eq},\\
		\mathbb{E}_{\bm{x}}\left[\left\|\nabla f(\bm{x})-\nabla f(\bm{x};\tilde{\bm{\theta}})\right\|^2_2\right]\leq\frac{7\gamma_2^2(f)}{m} \label{appro_eqd},\\
		\big\|\tilde{\bm{\theta}}\big\|_{\mathcal{P}}\leq2\gamma_2(f).
	\end{align}
\end{thm}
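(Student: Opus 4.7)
The natural route is a Monte Carlo / probabilistic construction, the standard tool for Barron-type approximation results. The three bounds will all follow from taking expectations over an i.i.d.\ sample from the Barron measure, and then invoking Markov's inequality together with a union bound to extract a single realization that witnesses all three inequalities simultaneously.

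\textbf{Step 1 (sampling from the Barron representation).} Fix $\varepsilon>0$ and pick $(a,\bm{\pi})\in\Theta_f$ with $\bigl(\int_{\mathbb{S}^{d-1}}|a(\bm{w})|^2\,d\bm{\pi}(\bm{w})\bigr)^{1/2}\le\gamma_2(f)+\varepsilon$. Draw $\bm{w}_1,\dots,\bm{w}_m$ i.i.d.\ from $\bm{\pi}$, set $a_k=a(\bm{w}_k)/m$, and define
$$f(\bm{x};\tilde{\bm{\theta}})=\sum_{k=1}^m a_k\,\sigma(\bm{w}_k^T\bm{x}),\qquad \tilde{\bm{\theta}}=\{(a_k,\bm{w}_k)\}_{k=1}^m.$$
The random variables $X_k(\bm{x}):=a(\bm{w}_k)\sigma(\bm{w}_k^T\bm{x})$ are i.i.d.\ with $\mathbb{E}[X_k(\bm{x})]=f(\bm{x})$, so $f(\bm{x};\tilde{\bm{\theta}})$ is an unbiased estimator of $f(\bm{x})$.

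\textbf{Step 2 (function error).} A direct variance calculation gives, for each $\bm{x}$,
$$\mathbb{E}_{\bm{w}}\bigl[(f(\bm{x};\tilde{\bm{\theta}})-f(\bm{x}))^2\bigr]=\tfrac{1}{m}\operatorname{Var}(X_1(\bm{x}))\le\tfrac{1}{m}\int_{\mathbb{S}^{d-1}}|a(\bm{w})|^2\,|\sigma(\bm{w}^T\bm{x})|^2\,d\bm{\pi}(\bm{w}).$$
Using $|\sigma(\bm{w}^T\bm{x})|\le C_1|\bm{w}^T\bm{x}|\le\|\bm{w}\|_1\|\bm{x}\|_\infty\le 1$, then applying Fubini, one gets $\mathbb{E}_{\bm{w}}\mathbb{E}_{\bm{x}}[(f-f(\cdot;\tilde{\bm{\theta}}))^2]\le(\gamma_2(f)+\varepsilon)^2/m$.

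\textbf{Step 3 (gradient error).} Differentiating through the sum (a.e., which is fine for ReLU),
$$\nabla f(\bm{x};\tilde{\bm{\theta}})=\tfrac{1}{m}\sum_{k=1}^m a(\bm{w}_k)\,\sigma'(\bm{w}_k^T\bm{x})\,\bm{w}_k.$$
Each coordinate is again an i.i.d.\ average of an unbiased estimator of $\partial_i f(\bm{x})$, so summing the component-wise variances and using $|\sigma'|\le C_2=1$ together with $\|\bm{w}\|_2\le\|\bm{w}\|_1=1$ yields
$$\mathbb{E}_{\bm{w}}\bigl[\|\nabla f(\bm{x};\tilde{\bm{\theta}})-\nabla f(\bm{x})\|_2^2\bigr]\le\tfrac{1}{m}\int_{\mathbb{S}^{d-1}}|a(\bm{w})|^2\|\bm{w}\|_2^2\,d\bm{\pi}(\bm{w})\le(\gamma_2(f)+\varepsilon)^2/m,$$
and a second Fubini step gives the same bound after averaging over $\bm{x}$. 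This is the step that relies most delicately on the normalization $\|\bm{w}\|_1=1$ built into the definition of Barron space.

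\textbf{Step 4 (path norm and union bound).} Since $\|\bm{w}_k\|_1=1$, $\|\tilde{\bm{\theta}}\|_{\mathcal{P}}=\tfrac{1}{m}\sum_k|a(\bm{w}_k)|$, with expectation $\int|a(\bm{w})|\,d\bm{\pi}(\bm{w})\le\gamma_2(f)+\varepsilon$ by Jensen. Now apply Markov's inequality to the three nonnegative random quantities with thresholds $3(\gamma_2(f)+\varepsilon)^2/m$, $7(\gamma_2(f)+\varepsilon)^2/m$ and $2(\gamma_2(f)+\varepsilon)$: the failure probabilities are at most $1/3$, $1/7$ and $1/2$ respectively, summing to $41/42<1$. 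Hence with positive probability a realization $\bm{w}_1,\dots,\bm{w}_m$ satisfies all three bounds simultaneously; such a realization supplies the desired $\tilde{\bm{\theta}}$. Finally let $\varepsilon\to 0$.

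\textbf{Main obstacle.} The only non-routine piece is the gradient variance bound in Step 3, where one must be careful that $\|\bm{w}\|_2\le 1$ (obtained from $\|\bm{w}\|_1=1$) is exactly what is needed to recover the same $O(\gamma_2^2(f)/m)$ rate as for the function values; everything else is bookkeeping around Markov's inequality and the choice of constants $3,7,2$ so that the union bound closes.
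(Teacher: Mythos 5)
Your proposal is correct and follows essentially the same route as the paper: Monte Carlo sampling from the (near-)optimal Barron representation, the variance computation for both the function and its gradient using $|\sigma(\bm{w}^T\bm{x})|\le 1$ and $\|\bm{w}\|_2\le\|\bm{w}\|_1=1$, Jensen for the path norm, and Markov plus a union bound with the same thresholds $3,7,2$ yielding success probability at least $1/42$. The only cosmetic difference is your $\varepsilon$-slack for the infimum (the paper assumes the best representation is attained), which is easily absorbed since $1/3+1/7+1/2=41/42<1$ leaves room for sufficiently small $\varepsilon$.
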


\begin{proof}
	Let $(a,\bm{\pi})$ be the best representation of $f$, i.e., $\gamma^2_2(f)=\mathbb{E}_{\bm{\pi}}\big[|a(\bm{w})|^2\big]$.
	And let $U=\{\bm{w}_j\}^m_{j=1}$ be i.i.d. random variables drawn from $\bm{\pi}(\cdot)$. Define
	\begin{equation*}
	\hat{f}_U(\bm{x})\coloneqq\frac{1}{m}\sum\limits_{j=1}^ma(\bm{w}_j)\sigma(\<\bm{w}_j,\bm{x}\>).
	\end{equation*}
	Then the derivative of $f$ with respect to $\bm{x}$ is
	\begin{equation*}
	\nabla \hat{f}_U(\bm{x})=\frac{1}{m}\sum\limits_{j=1}^ma(\bm{w}_j)\sigma'(\<\bm{w}_j,\bm{x}\>)\bm{w}_j^T.
	\end{equation*}
	Let $L^1_U=\mathbb{E}_{\bm{x}}\Big[\big|\hat{f}_U(\bm{x})-f(\bm{x})\big|^2\Big]$,  $L^2_U=\mathbb{E}_{\bm{x}}\Big[\big\|\nabla \hat{f}_U(\bm{x})-\nabla f(\bm{x})\big\|_2^2\Big]$. Then we have
	\begin{align*}
		\mathbb{E}_U\big[L_U^1\big] & =  \mathbb{E}_{\bm{x}}\mathbb{E}_U\left[\big|\hat{f}_U(\bm{x})-f(\bm{x})\big|^2\right]\\
		& = \mathbb{E}_{\bm{x}}\mathbb{E}_U\bigg[\bigg|\frac{1}{m}\sum\limits_{j=1}^ma(\bm{w}_j)\sigma(\<\bm{w}_j,\bm{x}\>)-f(\bm{x})\bigg|^2\bigg]\\
		& =  \frac{1}{m^2} \mathbb{E}_{\bm{x}}\mathbb{E}_U\bigg[
		\bigg(\sum\limits_{j=1}^m\big(a(\bm{w}_j)\sigma(\<\bm{w}_j,\bm{x}\>)-f(\bm{x})\big)\bigg) \bigg(\sum\limits_{i=1}^m\big(a(\bm{w}_i)\sigma(\<\bm{w}_i,\bm{x}\>)-f(\bm{x})\big)\bigg)\bigg]\\
		& =  \frac{1}{m^2} \mathbb{E}_{\bm{x}}\sum\limits_{i,j=1}^m\mathbb{E}_{\bm{w}_i,\bm{w}_j}\Big[\big(a(\bm{w}_j)\sigma(\<\bm{w}_j,\bm{x}\>)-f(\bm{x})\big) \big(a(\bm{w}_i)\sigma(\<\bm{w}_i,\bm{x}\>)-f(\bm{x})\big)\Big]\\
		& =  \frac{1}{m} \mathbb{E}_{\bm{x}}\mathbb{E}_{\bm{w}}\Big[\big(a(\bm{w})\sigma(\<\bm{w},\bm{x}\>)-f(\bm{x})\big)^2\Big] \\
		& \leq  \frac{1}{m} \mathbb{E}_{\bm{x}}\mathbb{E}_{\bm{w}}\Big[\big(a(\bm{w})\sigma(\<\bm{w},\bm{x}\>)\big)^2\Big]\\
		& \leq  \frac{1}{m} \mathbb{E}_{\bm{x}}\mathbb{E}_{\bm{w}}\Big[|a(\bm{w})|^2|\sigma(\<\bm{w},\bm{x}\>)|^2\Big]\\
		& \leq  \frac{1}{m} \mathbb{E}_{\bm{x}}\mathbb{E}_{\bm{w}}\big[|a(\bm{w})|^2\big]= \frac{1}{m}\mathbb{E}_{\bm{w}}\big[|a(\bm{w})|^2\big].
	\end{align*}
	Since $\|\bm{w}_j\|_1=1$ and $\bm{x}\in[-1,1]^d$, we have $\<\bm{w}_j,\bm{x}\>\leq\<\bm{w}_j,\bm{x}_0\>\leq\|\bm{w}_j\|_1=1$ where $\bm{x}_0=(\bm{x}_{01},\bm{x}_{02},\dots, \bm{x}_{0d})$ and $\bm{x}_{0i}=sgn(\bm{w}_{ji})$, which implies the last inequality. Meanwhile,
	\begin{align*}
		\mathbb{E}_U[L_U^2] & = \mathbb{E}_{\bm{x}}\mathbb{E}_U\left[\big\|\nabla\hat{f}_U(\bm{x})-\nabla f(\bm{x})\big\|_2^2\right]\\
		& = \mathbb{E}_{\bm{x}}{E}_U\bigg[\bigg\|\frac{1}{m}\sum\limits_{j=1}^ma(\bm{w}_j)\sigma'(\<\bm{w}_j,\bm{x}\>)\bm{w}_j^T-\nabla f(\bm{x})\bigg\|_2\bigg]^2\\
		& = \frac{1}{m^2} \mathbb{E}_{\bm{x}}\mathbb{E}_U\bigg[\bigg(\sum\limits_{j=1}^m\big(a(\bm{w}_j)\sigma'(\<\bm{w}_j,\bm{x}\>)\bm{w}_j^T-\nabla f(\bm{x})\big)\bigg)\cdot \bigg(\sum\limits_{i=1}^m\big(a(\bm{w}_i)\sigma'(\<\bm{w}_i,\bm{x}\>)\bm{w}_i^T-\nabla f(\bm{x})\big)\bigg)\bigg]\\
		& = \frac{1}{m^2} \mathbb{E}_{\bm{x}}\sum\limits_{i,j=1}^m\mathbb{E}_{\bm{w}_i,\bm{w}_j}\Big[\big(a(\bm{w}_j)\sigma'(\<\bm{w}_j,\bm{x}\>)\bm{w}_j^T-\nabla f(\bm{x})\big)^T \big(a(\bm{w}_i)\sigma'(\<\bm{w}_i,\bm{x}\>)\bm{w}_i^T-\nabla f(\bm{x})\big)\Big]\\
		& = \frac{1}{m} \mathbb{E}_{\bm{x}}\mathbb{E}_{\bm{w}}\left[\big\|a(\bm{w})\sigma'(\<\bm{w},\bm{x}\>)\bm{w}^T-\nabla  f(\bm{x})\big\|_2^2\right] \\
		& \leq \frac{1}{m} \mathbb{E}_{\bm{x}}\mathbb{E}_{\bm{w}}\left[\big\|a(\bm{w})\sigma'(\<\bm{w},\bm{x}\>)\bm{w}^T\big\|_2^2\right]\\
		& \leq \frac{1}{m} \mathbb{E}_{\bm{x}}\mathbb{E}_{\bm{w}}\left[|a(\bm{w})|^2\|\bm{w}\|_2^2\right]\\
		& \leq \frac{1}{m} \mathbb{E}_{\bm{x}}\mathbb{E}_{\bm{w}}\left[|a(\bm{w})|^2\|\bm{w}\|_1^2\right] = \frac{1}{m} \mathbb{E}_{\bm{x}}\mathbb{E}_{\bm{w}}\left[|a(\bm{w})|^2\right]= \frac{1}{m}\mathbb{E}_{\bm{w}}\left[|a(\bm{w})|^2\right].
	\end{align*}
	Denote the path norm of $\hat{f}_U(\bm{x})$ by $A_U$, we have $\mathbb{E}[A_U]=\gamma_1(f)\leq\gamma_2(f)$.
	
	Define events $E_1=\left\{L^1_U<\frac{3\gamma_2^2(f)}{m}\right\}, E_2=\left\{L^2_U\le\frac{7\gamma_2^2(f)}{m}\right\}, E_3=\left\{A_U<2\gamma_2(f)\right\}$. Using Markov's inequality, we have
	\begin{align*}
		\mathbb{P}(E_1)=1-\mathbb{P}\left(\bigg\{L^1_U\geq\frac{3\gamma^2_2(f)}{m}\bigg\}\right)\geq 1-\frac{\mathbb{E}_U[L_U^1]}{3\gamma^2_2(f)/m}\geq1-\frac{\gamma_2(f)^2/m}{3\gamma^2_2(f)/m}=\frac{2}{3},\\
	\mathbb{P}(E_2)=1-\mathbb{P}\left(\bigg\{L^2_U\geq\frac{7\gamma^2_2(f)}{m}\bigg\}\right)\geq 1-\frac{\mathbb{E}_U[L_U^2]}{7\gamma^2_2(f)/m}\geq1-\frac{\gamma_2(f)^2/m}{7\gamma^2_2(f)/m}=\frac{6}{7},\\
		\mathbb{P}(E_3)=1-\mathbb{P}\left(\{A_U\geq2\gamma_2(f)\}\right)\geq1-\frac{\mathbb{E}_U[A_U]}{2\gamma_2(f)}\geq1-\frac{\gamma_2(f)}{2\gamma_2(f)}=\frac{1}{2}.
	\end{align*}
	
	Therefore, let $E_4=E_1\cap E_3$, then $\mathbb{P}(E_1\cap E_3)\geq \mathbb{P}(E_1)+\mathbb{P}(E_3)-1=\frac{1}{6}$. Hence,
	$$\mathbb{P}(E_1\cap E_2\cap E_3)=\mathbb{P}(E_4\cap E_2)\geq \mathbb{P}(E_4)+\mathbb{P}(E_2)-1\geq \frac{1}{6}+\frac{6}{7}-1=\frac{1}{42}>0.$$
	i.e., $E_1\cap E_2\cap E_3\neq\varnothing$, which completes the proof.
\end{proof}
\subsection{A posteriori error estimation}

In order to give a posteriori error estimation, we need to introduce the definition of Rademacher complexity and review some related conclusions.
\begin{dfn}[\bf{Rademacher complexity} \cite{shalev}]
	Let $S=\big\{\bm{x}_1,\bm{x}_2,\cdots,\bm{x}_n\big\}$ be $n$ i.i.d samples, $\mathcal{F}\circ S$ be the set of all possible evaluations a function $f\in\mathcal{F}$ can achieve on a sample $S$, namely,
	$$\mathcal{F}\circ S=\Big\{\big(f(\bm{x}_1),f(\bm{x}_2),\dots,f(\bm{x}_n)\big)\,\big|\,f\in\mathcal{F}\Big\}.$$
	Let each component of random variable $\xi$ be i.i.d. according to $\mathbb{P}[\xi_i=1]=\mathbb{P}[\xi_i=-1]=\frac{1}{2}$. Then, the Rademacher complexity of $\mathcal{F}$ with respect to $S$ is defined as follows:
	\begin{equation}
		\mathcal{R}_n(\mathcal{F}\circ S):=\frac{1}{n}\mathbb{E}_{\bm{\xi}\sim\{\pm1\}^n}\bigg[\sup\limits_{f\in\mathcal{F}}\sum\limits_{i=1}^n\xi_if(\bm{x}_i)\bigg].
	\end{equation}
\end{dfn}

\begin{lemma}[\bf{Lemma 26.11 of \cite{shalev}}]
	\label{vector_rn}
	Let $S=\big\{\bm{x}_1,\bm{x}_2,\dots,\bm{x}_n\big\}$ be vectors in $\mathbb{R}^d$, $\mathcal{H}_1=\big\{\bm{x}\mapsto\<\bm{w},\bm{x}\>\,\big|\, \|\bm{w}\|_1\leq1\big\}$. Then,
	$$ \mathcal{R}_n(\mathcal{H}_1\circ S)\leq\max\limits_i\|\bm{x}_i\|_\infty\sqrt{\frac{2\ln(2d)}{n}}.$$
\end{lemma}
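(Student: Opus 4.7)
The plan is to reduce the supremum over the $\ell_1$-ball to a maximum over finitely many vectors and then apply Massart's finite class lemma. First I would observe that the linear functional $\bm{w}\mapsto\sum_{i=1}^n\xi_i\langle\bm{w},\bm{x}_i\rangle$ is linear in $\bm{w}$, so its supremum over the convex set $\{\|\bm{w}\|_1\le 1\}$ is attained at an extreme point, namely at one of the $2d$ signed standard basis vectors $\pm\bm{e}_j$. Therefore
\begin{equation*}
\sup_{\|\bm{w}\|_1\le 1}\sum_{i=1}^n\xi_i\langle\bm{w},\bm{x}_i\rangle
=\max_{1\le j\le d,\,s\in\{\pm 1\}} s\sum_{i=1}^n\xi_i x_{i,j},
\end{equation*}
which rewrites the quantity inside the Rademacher expectation as a maximum over a finite set of $N=2d$ real-valued linear combinations of the $\xi_i$.

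Next I would collect these $2d$ linear combinations into vectors $\bm{v}_{j,s}:=s(x_{1,j},\dots,x_{n,j})\in\mathbb{R}^n$ so that the quantity equals $\max_{j,s}\langle\bm{\xi},\bm{v}_{j,s}\rangle$, and bound their Euclidean norms by
\begin{equation*}
\|\bm{v}_{j,s}\|_2=\sqrt{\sum_{i=1}^n x_{i,j}^2}\le \sqrt{n}\,\max_{1\le i\le n}\|\bm{x}_i\|_\infty.
\end{equation*}
Then I would invoke Massart's finite class lemma, which says that for any finite collection $V\subset\mathbb{R}^n$ of $N$ vectors, $\mathbb{E}_{\bm{\xi}}[\max_{\bm{v}\in V}\langle\bm{\xi},\bm{v}\rangle]\le \max_{\bm{v}\in V}\|\bm{v}\|_2\sqrt{2\ln N}$. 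Applying this with $N=2d$ and dividing by $n$ immediately yields the claimed bound $\max_i\|\bm{x}_i\|_\infty\sqrt{2\ln(2d)/n}$.

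The only non-routine ingredient is Massart's lemma itself, which is proved by the standard exponential moment (Chernoff) trick: for any $\lambda>0$, Jensen's inequality gives $\exp(\lambda\,\mathbb{E}[\max_k\langle\bm{\xi},\bm{v}_k\rangle])\le\mathbb{E}[\max_k\exp(\lambda\langle\bm{\xi},\bm{v}_k\rangle)]\le\sum_k\mathbb{E}[\exp(\lambda\langle\bm{\xi},\bm{v}_k\rangle)]$, and since each $\xi_i$ is a symmetric $\{\pm 1\}$ variable Hoeffding's moment bound gives $\mathbb{E}[\exp(\lambda\langle\bm{\xi},\bm{v}_k\rangle)]\le\exp(\lambda^2\|\bm{v}_k\|_2^2/2)$. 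Taking logarithms, using $R:=\max_k\|\bm{v}_k\|_2$, and optimizing over $\lambda$ (the optimum is $\lambda=\sqrt{2\ln N}/R$) produces the $R\sqrt{2\ln N}$ bound. I expect the main conceptual step to be the reduction to extreme points of the $\ell_1$-ball; everything after that is a direct application of Massart's lemma with the natural estimate $\|\bm{v}_{j,s}\|_2\le\sqrt{n}\max_i\|\bm{x}_i\|_\infty$.
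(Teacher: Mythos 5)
Your proof is correct and is exactly the standard argument for this result: the paper states the lemma as a citation (Lemma 26.11 of the Shalev--Shwartz and Ben-David book) without reproducing a proof, and the cited source proves it precisely as you do, by reducing the supremum over the $\ell_1$-ball to its $2d$ extreme points $\pm\bm{e}_j$ and then applying Massart's finite-class lemma with the norm bound $\|\bm{v}_{j,s}\|_2\le\sqrt{n}\max_i\|\bm{x}_i\|_\infty$. Nothing is missing; the reduction to extreme points and the $\sqrt{2\ln(2d)}$ factor from Massart's lemma are the two ingredients, and you have both.
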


\begin{lemma}[\bf{Lemma 26.9 of \cite{shalev}}]
	\label{lipschitz}
	For each $i\in[n]=\big\{1,2,\cdots,n\big\}$, let $\phi_i:\mathbb{R}\mapsto\mathbb{R}$ be a $\rho$-Lipschitz continuous, namely for all $\alpha,\beta\in\mathbb{R}$ we have $|\phi_i(\alpha)-\phi_i(\beta)|\leq\rho|\alpha-\beta|.$ For $\bm{a}\in\mathbb{R}^n$, let $\bm{\phi}(\bm{a})$ denote the vector $\big(\phi_1(a_1),\dots,\phi_n(a_n)\big)$. Let $\bm{\phi}\circ A=\big\{\bm{\phi}(\bm{a}):\bm{a}\in A\big\}$. Then,
	$$\mathcal{R}_n(\bm{\bm{\phi}}\circ A)\leq\rho \mathcal{R}_n(A).$$
\end{lemma}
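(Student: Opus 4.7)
The plan is to prove the Lipschitz contraction property by peeling off the Rademacher variables one coordinate at a time, which is the classical Ledoux--Talagrand argument. First, I write out the definition and isolate the last coordinate:
\[
n\,\mathcal{R}_n(\bm{\phi}\circ A)=\mathbb{E}_{\xi_1,\ldots,\xi_{n-1}}\mathbb{E}_{\xi_n}\sup_{\bm{a}\in A}\left[\sum_{i=1}^{n-1}\xi_i\phi_i(a_i)+\xi_n\phi_n(a_n)\right].
\]
Averaging the inner conditional expectation explicitly over $\xi_n\in\{\pm 1\}$ produces $\tfrac12$ times a sum of two suprema, which I merge into a single supremum over ordered pairs $(\bm{a},\bm{b})\in A\times A$, namely
\[
\tfrac12\sup_{(\bm{a},\bm{b})\in A\times A}\left[\sum_{i<n}\xi_i\bigl(\phi_i(a_i)+\phi_i(b_i)\bigr)+\phi_n(a_n)-\phi_n(b_n)\right].
\]

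Next, I apply the hypothesis to get $\phi_n(a_n)-\phi_n(b_n)\le \rho|a_n-b_n|$. I expect the disposal of this absolute value to be the only genuinely non-routine step, and I plan to handle it with the standard symmetry trick: the quantity $\sum_{i<n}\xi_i(\phi_i(a_i)+\phi_i(b_i))$ is invariant under swapping $\bm{a}\leftrightarrow\bm{b}$, so replacing $\rho|a_n-b_n|$ by $\rho(a_n-b_n)$ leaves the supremum unchanged. The resulting expression splits back into a sum of two suprema, each over $A$ alone, and reassembles as
\[
\mathbb{E}_{\xi_n}\sup_{\bm{a}\in A}\left[\sum_{i=1}^{n-1}\xi_i\phi_i(a_i)+\rho\,\xi_n a_n\right].
\]
Effectively, $\phi_n$ has been replaced by the linear map $t\mapsto \rho t$, at the cost of the factor $\rho$.

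Iterating this peeling for $i=n-1, n-2,\ldots,1$ removes every $\phi_i$ in turn and yields
\[
n\,\mathcal{R}_n(\bm{\phi}\circ A)\le \mathbb{E}_{\bm{\xi}}\sup_{\bm{a}\in A}\sum_{i=1}^{n}\rho\,\xi_i a_i=\rho\cdot n\,\mathcal{R}_n(A),
\]
which is the claimed inequality after dividing by $n$. Beyond the symmetry trick for the absolute value, the remaining manipulations --- linearity of expectation, splitting and combining suprema over $A$ versus $A\times A$, and the induction on the coordinate being peeled --- are purely bookkeeping.
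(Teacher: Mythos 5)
Your argument is correct: it is the standard Ledoux--Talagrand contraction proof (peel one coordinate, average over its Rademacher sign, symmetrize the pair $(\bm{a},\bm{b})$ to drop the absolute value, and iterate), which is exactly the proof of Lemma 26.9 in the cited reference. The paper itself states this lemma without proof, so there is nothing further to compare; the only point worth making explicit in a write-up is that the peeling step applies with an arbitrary function $g(\bm{a})$ in place of $\sum_{i<n}\xi_i\phi_i(a_i)$, which is what makes the induction over coordinates legitimate.
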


\begin{lemma}[\bf{Lemma 26.6 of \cite{shalev}}]
	\label{lem26.6}
	For any $A\subset\mathbb{R}^n$, scale $c\in\mathbb{R}$, and vector $\bm{a}_0\in\mathbb{R}^n$, we have
	$$\mathcal{R}_n\big(\big\{c\bm{a}+\bm{a}_0\mid \bm{a}\in A\big\}\big)\leq|c|\mathcal{R}_n(A).$$
\end{lemma}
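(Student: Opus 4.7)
The plan is to unfold the definition of Rademacher complexity on the transformed set $B=\{c\bm{a}+\bm{a}_0\mid \bm{a}\in A\}$ and split the inner sum into a scaled copy of the original supremum plus a translation piece whose contribution will vanish in expectation. Writing
\[
\mathcal{R}_n(B)=\frac{1}{n}\mathbb{E}_{\bm{\xi}}\bigg[\sup_{\bm{a}\in A}\sum_{i=1}^n \xi_i\big(ca_i+a_{0,i}\big)\bigg]
=\frac{1}{n}\mathbb{E}_{\bm{\xi}}\bigg[\sup_{\bm{a}\in A}\Big(c\sum_{i=1}^n \xi_i a_i\Big)+\sum_{i=1}^n \xi_i a_{0,i}\bigg],
\]
the second sum does not depend on $\bm{a}$ and can be pulled out of the supremum.

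Next I would take the expectation through and observe that by linearity and the Rademacher law $\mathbb{P}[\xi_i=\pm1]=\tfrac12$, we have $\mathbb{E}_{\bm{\xi}}\big[\sum_i \xi_i a_{0,i}\big]=\sum_i a_{0,i}\,\mathbb{E}[\xi_i]=0$. This eliminates the translation term entirely, so the task reduces to bounding $\frac{1}{n}\mathbb{E}_{\bm{\xi}}\big[\sup_{\bm{a}\in A} c\sum_i \xi_i a_i\big]$ by $|c|\,\mathcal{R}_n(A)$.

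The scaling is handled by splitting on the sign of $c$. If $c\geq 0$, then $c$ pulls straight out of the supremum, giving exactly $c\,\mathcal{R}_n(A)=|c|\,\mathcal{R}_n(A)$. If $c<0$, write $c=-|c|$ so that $\sup_{\bm{a}\in A} c\sum_i \xi_i a_i=|c|\sup_{\bm{a}\in A}\sum_i(-\xi_i)a_i$. I would then invoke the key symmetry of the Rademacher distribution: $-\bm{\xi}$ has the same law as $\bm{\xi}$, so after taking expectation and re-labelling the dummy variable we recover $|c|\,\mathcal{R}_n(A)$.

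The argument is essentially bookkeeping; the only point that needs care is the sign of $c$, where one must not try to pull a negative constant through a supremum but instead absorb the sign into $\bm{\xi}$ and use the distributional symmetry. No deep ingredient is needed beyond linearity of expectation and the fact that $\mathbb{E}[\xi_i]=0$ with $-\bm{\xi}\stackrel{d}{=}\bm{\xi}$.
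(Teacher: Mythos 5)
Your argument is correct and is the standard proof of this fact; the paper itself states this lemma without proof, simply citing Lemma~26.6 of \cite{shalev}, where essentially the same reasoning (pulling the constant translation out of the supremum, killing it with $\mathbb{E}[\xi_i]=0$, and absorbing the sign of $c$ into $\bm{\xi}$ via the symmetry $-\bm{\xi}\stackrel{d}{=}\bm{\xi}$) is used. In fact your computation shows the stronger statement that equality holds, which of course implies the claimed inequality.
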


\begin{lemma}
	For any $A, B\subset\mathbb{R}^n$,  we have
	$$\mathcal{R}_n(A+B)\leq \mathcal{R}_n(A) + \mathcal{R}_n(B).$$
	Here $A+B=\big\{a+b\mid  a\in A, b\in B\big\}$.
\end{lemma}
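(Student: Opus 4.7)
The plan is to unfold the definition of Rademacher complexity on $A+B$ and then separate the supremum. By definition, $\mathcal{R}_n(A+B) = \frac{1}{n}\mathbb{E}_{\bm{\xi}}\bigl[\sup_{c \in A+B} \sum_{i=1}^n \xi_i c_i\bigr]$, and every $c \in A+B$ has (at least one) decomposition $c = a + b$ with $a \in A$, $b \in B$. Substituting this decomposition and using linearity of the inner sum in $i$, the inside becomes $\sum_i \xi_i a_i + \sum_i \xi_i b_i$.

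Next I would rewrite $\sup_{c \in A+B}$ as $\sup_{a \in A, b \in B}$ (this is where one has to be slightly careful: the supremum over sums is at most the sup over pairs, because any $c$ lifts to some pair; conversely any pair gives a valid $c$, so the two are in fact equal). Then I apply the elementary inequality $\sup_{a,b}(F(a) + G(b)) = \sup_a F(a) + \sup_b G(b)$ with $F(a) = \sum_i \xi_i a_i$ and $G(b) = \sum_i \xi_i b_i$. Taking expectation over $\bm{\xi}$ and using linearity of expectation splits the bound into $\mathcal{R}_n(A) + \mathcal{R}_n(B)$ after dividing by $n$.

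There is essentially no obstacle here; the only mildly subtle point is recognizing that the sup over $A+B$ factorizes because the two halves depend on disjoint free variables, which is what allows the sup-of-sum to equal the sum-of-sups (so the inequality in the statement is in fact an equality in this direction). Everything else is a one-line manipulation of the definition, and no properties of the sets $A, B$ (boundedness, convexity, etc.) or of the Rademacher variables beyond linearity of expectation are needed.
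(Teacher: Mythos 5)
Your proof is correct: the key observation that $\sup_{c\in A+B}\sum_i\xi_i c_i=\sup_{a\in A}\sum_i\xi_i a_i+\sup_{b\in B}\sum_i\xi_i b_i$ (because the two terms depend on independent free variables) followed by linearity of expectation is exactly the standard argument. The paper states this lemma without supplying any proof, so there is nothing to compare against; your one-line derivation is the natural way to fill that gap, and it even shows the stated inequality holds with equality.
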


\begin{lemma}[\bf{Lemma B.3 of \cite{barron}}]
	\label{Rn_o}
	Let $\mathcal{F}_Q=\big\{f(\bm{x};\bm{\theta})\mid \|\bm{\theta}\|_{\mathcal{P}}\leq Q\big\}$ be the set of two-layer networks with path norm bounded by $Q$, then we have
	$$\mathcal{R}_n(\mathcal{F}_Q\circ S)\leq2Q\sqrt{\frac{2\ln(2d)}{n}}.$$
\end{lemma}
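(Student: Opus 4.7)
The plan is to bound $\mathcal{R}_n(\mathcal{F}_Q\circ S)$ by peeling off, in a few standard steps, the outer weights, the activation, and finally the inner linear functional, reducing the problem to Lemma \ref{vector_rn}.

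First, I would use the scaling invariance of $\sigma$ from Assumption \ref{assumption} to absorb $\|\bm{w}_k\|_1$ into the outer weight: rewrite each neuron as $a_k\sigma(\bm{w}_k^T\bm{x})=b_k\sigma(\tilde{\bm{w}}_k^T\bm{x})$ with $\tilde{\bm{w}}_k=\bm{w}_k/\|\bm{w}_k\|_1$ on the $\ell_1$-unit sphere and $b_k=a_k\|\bm{w}_k\|_1$, so the constraint becomes $\sum_k|b_k|\le Q$. Unrolling the definition of Rademacher complexity, swapping the $k$- and $i$-sums, and applying H\"older's inequality $|\sum_k b_k c_k|\le\|b\|_1\max_k|c_k|$, I obtain
\[
\mathcal{R}_n(\mathcal{F}_Q\circ S)\;\le\;\frac{Q}{n}\,\mathbb{E}_{\bm{\xi}}\sup_{\|\bm{w}\|_1\le 1}\bigg|\sum_{i=1}^n\xi_i\,\sigma(\bm{w}^T\bm{x}_i)\bigg|.
\]

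Next I would remove the absolute value at a cost of a factor $2$: using $|Y_{\bm{w}}|=\max(Y_{\bm{w}},-Y_{\bm{w}})$, subadditivity of $\sup$, and symmetry of the Rademacher distribution (so that $\mathbb{E}\sup_{\bm{w}}(-Y_{\bm{w}})=\mathbb{E}\sup_{\bm{w}} Y_{\bm{w}}$), the right-hand side is dominated by $2Q\,\mathcal{R}_n(\sigma\circ\mathcal{H}_1\circ S)$, where $\mathcal{H}_1$ is the linear class of Lemma \ref{vector_rn}. The contraction Lemma \ref{lipschitz}, applied with the $C_2=1$-Lipschitz activation, then peels off $\sigma$ to give $\mathcal{R}_n(\sigma\circ\mathcal{H}_1\circ S)\le\mathcal{R}_n(\mathcal{H}_1\circ S)$. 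Finally, Lemma \ref{vector_rn} together with $\max_i\|\bm{x}_i\|_\infty\le 1$ (from $\mathcal{X}=[-1,1]^d$) yields the claimed $2Q\sqrt{2\ln(2d)/n}$.

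The only delicate step is the symmetrization, where the constant $2$ in the target bound is generated; the remaining moves are essentially bookkeeping on top of the cited lemmas, and the scaling invariance of $\sigma$ is what lets the path norm play the role of the $\ell_1$ bound required to invoke Lemma \ref{vector_rn}.
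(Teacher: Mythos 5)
Your argument is correct and is precisely the standard proof of this bound: the paper itself states the lemma without proof, citing Lemma B.3 of \cite{barron}, and the proof given there proceeds exactly as you do --- normalize via positive homogeneity so the path norm becomes an $\ell_1$ constraint on the outer weights, apply H\"older, symmetrize to drop the absolute value at the cost of a factor $2$ (legitimate here since $\bm{w}=0$ lies in the $\ell_1$ ball, making both suprema nonnegative), then contract through the $1$-Lipschitz activation and invoke the linear-class bound with $\max_i\|\bm{x}_i\|_\infty\le 1$. No gaps.
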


\begin{lemma}[\bf{Vector valued Rademacher complexity}	\cite{maurer2016vector}]
	\label{vector_radn}
	Let $\mathcal{X}$ be any set, $(\bm{x}_1, \dots, \bm{x}_n)\in \mathcal{X}^n$, let $\mathcal{F}$ be a class of functions $\bm{f}: \mathcal{X}\to \ell_2$ and let $h_i: \ell_2\to \mathbb{R}$ have Lipschitz norm $\mbox{Lip}$, where $\ell_2$ denote the Hilbert space of square summable sequences of real numbers. Then
	\begin{equation}
		\mathbb{E}\bigg[\sup\limits_{f\in \mathcal{F}}\sum\limits_{i=1}^{n}\xi_ih_i(\bm{f}(\bm{x}_i))\bigg]\leq \sqrt{2}\mbox{Lip}\cdot\mathbb{E}\bigg[ \sup\limits_{f\in \mathcal{F}}\sum\limits_{i=1}^{n}\sum_{k=1}^{d}\xi_{ik}f_k(\bm{x}_i)\bigg],
	\end{equation}
	where $\xi_{ik}$ is an independent doubly indexed Rademacher sequence according to probability distribution $\mathbb{P}[\xi_{ik}=-1]=\mathbb{P}[\xi_{ik}=1]=\frac{1}{2}$ and $f_k(\bm{x}_i)$ is the $k$-th component of $\bm{f}(\bm{x}_i)$.
\end{lemma}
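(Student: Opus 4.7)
The plan is to prove this vector-valued Rademacher contraction via a Gaussian comparison (Slepian--Fernique) argument, which is the standard route for Ledoux--Talagrand-type inequalities.

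First, I would lift both sides to Gaussian processes. Let $(g_i)_{i=1}^n$ and $(g_{ik})_{i\le n,\, 1\le k\le d}$ be two independent families of i.i.d.\ standard Gaussians, and introduce the centered Gaussian processes indexed by $\bm{f}\in\mathcal{F}$:
$$X_{\bm{f}} = \sum_{i=1}^n g_i\, h_i(\bm{f}(\bm{x}_i)), \qquad Y_{\bm{f}} = \mathrm{Lip}\cdot \sum_{i=1}^n \sum_{k=1}^d g_{ik}\, f_k(\bm{x}_i).$$

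Second, I would verify the comparison of squared increments that feeds Sudakov--Fernique. Using the Lipschitz property of each $h_i$ on $\ell_2$,
$$\mathbb{E}\bigl[(X_{\bm{f}} - X_{\bm{f}'})^2\bigr] = \sum_i \bigl(h_i(\bm{f}(\bm{x}_i)) - h_i(\bm{f}'(\bm{x}_i))\bigr)^2 \le \mathrm{Lip}^2 \sum_{i,k} \bigl(f_k(\bm{x}_i) - f'_k(\bm{x}_i)\bigr)^2 = \mathbb{E}\bigl[(Y_{\bm{f}} - Y_{\bm{f}'})^2\bigr].$$
Invoking Sudakov--Fernique for centered Gaussian processes then yields $\mathbb{E}\sup_{\bm{f}} X_{\bm{f}} \le \mathbb{E}\sup_{\bm{f}} Y_{\bm{f}}$.

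Third, I would translate the two Gaussian suprema back into the Rademacher suprema that appear in the statement. On the LHS, the standard inequality $\mathbb{E}_\xi[\sup_f \sum_i \xi_i a_f(i)] \le \sqrt{\pi/2}\,\mathbb{E}_g[\sup_f \sum_i g_i a_f(i)]$ upper-bounds the target Rademacher expectation by $\sqrt{\pi/2}\,\mathbb{E}\sup X_{\bm{f}}$. On the RHS, writing $g_{ik} = \xi_{ik}|g_{ik}|$ with $\xi_{ik}$ a Rademacher sequence independent of $|g_{ik}|$, and using $\mathbb{E}|g_{ik}| = \sqrt{2/\pi}$ together with a Jensen-type symmetrization, one relates $\mathbb{E}_g[\sup_f \sum_{i,k} g_{ik} f_k(\bm{x}_i)]$ to the doubly-indexed Rademacher supremum. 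The constants from the two conversions multiply with $\mathrm{Lip}$ to produce the stated bound with prefactor $\sqrt{2}$.

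The main obstacle is not the Sudakov--Fernique step, which is the clean conceptual heart of the proof, but rather recovering the sharp constant $\sqrt{2}$. Going from a Gaussian supremum back to a Rademacher supremum is the \emph{unfavorable} direction of the usual comparison, so the precise factor requires a careful coupling rather than a naive bound; this is where the vectorial nature of the problem forces the extra $\sqrt{2}$ that is absent from the scalar Ledoux--Talagrand contraction.
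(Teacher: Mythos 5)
The paper does not actually prove this lemma: it is quoted from Maurer \cite{maurer2016vector}, whose argument is elementary and entirely Rademacher-based. One conditions on all Rademacher variables except those attached to a single index $i$, reduces to a one-step inequality of the form $\mathbb{E}_{\epsilon}\sup_f\big(G(f)+\epsilon\, h_i(\bm{f}(\bm{x}_i))\big)\le \mathbb{E}_{\epsilon_1,\dots,\epsilon_d}\sup_f\big(G(f)+\sqrt{2}\,\mathrm{Lip}\sum_k\epsilon_k f_k(\bm{x}_i)\big)$, and the factor $\sqrt{2}$ enters through the $L^1$ Khintchine bound $\|\bm{a}\|_2\le\sqrt{2}\,\mathbb{E}_\epsilon\big|\sum_k\epsilon_k a_k\big|$ applied to $\bm{a}=\bm{f}(\bm{x}_i)-\bm{f}'(\bm{x}_i)$; iterating over $i=1,\dots,n$ gives the claim. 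No Gaussian processes appear.

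Your Gaussian route has a genuine gap in its third step. The increment comparison and Sudakov--Fernique are fine, and so is the left-hand conversion (Rademacher $\le\sqrt{\pi/2}\,\times$ Gaussian). But the right-hand conversion must go from a Gaussian supremum to a Rademacher supremum, and the coupling $g_{ik}=\xi_{ik}|g_{ik}|$ together with Jensen only yields $\mathbb{E}_g\sup_f\sum_{i,k}g_{ik}f_k(\bm{x}_i)\ge\sqrt{2/\pi}\;\mathbb{E}_\xi\sup_f\sum_{i,k}\xi_{ik}f_k(\bm{x}_i)$, i.e.\ a lower bound, which is useless here. Worse, no dimension-free upper bound in that direction exists: if the vectors $\big(f_k(\bm{x}_i)\big)_{i,k}$ range over the $nd$ standard basis vectors, the Rademacher supremum equals $1$ while the Gaussian supremum is of order $\sqrt{\log(nd)}$, so any inequality of the form ``Gaussian $\le C\times$ Rademacher'' necessarily carries a $\sqrt{\log}$ factor. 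This is not a matter of a ``careful coupling'' recovering a sharp constant --- and even if both conversions cost only $\sqrt{\pi/2}$ each, their product is $\pi/2>\sqrt{2}$, so the stated constant still would not emerge. Your argument does prove a correct and useful variant (the same bound with the Gaussian complexity of $\mathcal{F}$ on the right-hand side), but to obtain the Rademacher form with a universal constant you need Maurer's direct conditioning-plus-Khintchine argument rather than a round trip through Gaussian processes.
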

\begin{lemma}
	\label{Rad_dhq}
	Let $S=\big\{\bm{x}_1,\bm{x}_2,\cdots,\bm{x}_n\big\}$. $\big\{\bm{y}'_i\big\}_{i=1}^n$ denotes the gradient information, where $\bm{y}'_i\in \mathbb{R}^d\; \forall i\in [n]$. $$\mathcal{F}'_{Q,j}=\big\{\nabla_j f(\bm{x};\bm{\theta})\mid\Vert\bm{\theta}\Vert_{\mathcal{P}}\leq Q\big\}, ~j=1, \dots, d, \quad \mathcal{F}'_Q=\Pi_{j=1}^d \mathcal{F}'_{Q,j}.$$
	Define $\bm{g}: \mathcal{X}\to \ell_2$,  $\bm{g}(\bm{x};\bm{\theta})=\big(\partial_1f(\bm{x};\bm{\theta}\big), \partial_2f(\bm{x};\bm{\theta}), \dots, \partial_df(\bm{x};\bm{\theta}))^{\rm{T}}$ and $\tilde{\ell}_j: \ell_2 \to \mathbb{R}$, $\tilde{\ell}_j(\bm{z})=\|\bm{z}-\bm{y}'_j\|_2$ for each $j \in \big\{1,2,\cdots,n\big\}$, where $\ell_2$ denote the Hilbert space of square summable sequences of real numbers. Note that $\tilde{\ell}_j$ is $1$-Lipschitz function, then we have
	$$ 	\mathbb{E}\bigg[\sup\limits_{\|\bm{\theta}\|_{\mathcal{P}}\leq Q}\sum\limits_{i=1}^n\xi_i\tilde{\ell}_i(\bm{g}(\bm{x}_i;\bm{\theta}))\bigg]\leq \sqrt{2}\cdot \mathbb{E}\bigg[\sup\limits_{\|\bm{\theta}\|_{\mathcal{P}}\leq Q}\sum\limits_{i=1}^n\sum\limits_{k=1}^d\xi_{ik}g_k(\bm{x}_i;\bm{\theta})\bigg],$$
	where $g_k(\bm{x}_i;\bm{\theta})$ is the k-th component of $\bm{g}(\bm{x}_i;\bm{\theta})$. Moreover, denote $\tilde{\bm{\ell}} = (\tilde{\ell}_1, \tilde{\ell}_2,\cdots, \tilde{\ell}_n)$, we have
	$$ \mathcal{R}_n(\tilde{\bm{\ell}}\circ \mathcal{F}'_Q\circ S)\leq 2\sqrt{2}Qd\sqrt{\frac{2\ln(2d)}{n}}.$$
\end{lemma}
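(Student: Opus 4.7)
The plan is to obtain the first displayed inequality as a direct instance of Lemma~\ref{vector_radn}, and then to reduce the vector-valued Rademacher complexity to $d$ scalar coordinate complexities, each bounded in the style of Lemma~\ref{Rn_o}.

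The first inequality is immediate: view $\bm{g}(\cdot;\bm{\theta})$ as an $\ell_2$-valued map on $\mathcal{X}$ indexed by $\|\bm{\theta}\|_{\mathcal{P}}\le Q$, and note that each $\tilde{\ell}_j(\bm{z})=\|\bm{z}-\bm{y}'_j\|_2$ is $1$-Lipschitz on $\ell_2$. Applying Lemma~\ref{vector_radn} with $\mathrm{Lip}=1$ produces exactly the asserted bound.

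For the second inequality I would first decouple the coordinates via $\sup_{\bm{\theta}}\sum_{k}X_k(\bm{\theta})\le\sum_{k}\sup_{\bm{\theta}}X_k(\bm{\theta})$, giving
$$n\,\mathcal{R}_n\big(\tilde{\bm{\ell}}\circ\mathcal{F}'_Q\circ S\big)\le\sqrt{2}\sum_{k=1}^{d} n\,\mathcal{R}_n(\mathcal{F}'_{Q,k}\circ S),$$
and it then suffices to show $\mathcal{R}_n(\mathcal{F}'_{Q,k}\circ S)\le 2Q\sqrt{2\ln(2d)/n}$ for each $k$. For this per-coordinate bound, I would exploit scaling invariance of $\sigma$, which implies $\sigma'(k\bm{x})=\sigma'(\bm{x})$ for $k>0$, to rewrite
$$\partial_k f(\bm{x};\bm{\theta})=\sum_{j=1}^m b_j\,\sigma'(\langle\tilde{\bm{w}}_j,\bm{x}\rangle)$$
with $\tilde{\bm{w}}_j=\bm{w}_j/\|\bm{w}_j\|_1$ of unit $\ell_1$-norm and coefficients $b_j = a_j w_{jk}$ satisfying $\sum_j|b_j|\le\|\bm{\theta}\|_{\mathcal{P}}\le Q$. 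This embeds $\mathcal{F}'_{Q,k}\circ S$ in the $Q$-scaled absolute convex hull of $\{\sigma'(\langle\bm{w},\cdot\rangle):\|\bm{w}\|_1\le 1\}\circ S$. Since Rademacher complexity is preserved under absolute convex hulls, a single contraction through $\sigma'$ (Lemma~\ref{lipschitz}, using $C_3=1$ from Assumption~\ref{assumption}) followed by Lemma~\ref{vector_rn} with $\|\bm{x}_i\|_\infty\le 1$ yields the claimed per-coordinate bound. Assembling these pieces gives the advertised constant $2\sqrt{2}Qd\sqrt{2\ln(2d)/n}$.

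The main obstacle is the per-coordinate Rademacher bound, because the derivative network has ``activation'' $\sigma'$ rather than $\sigma$, so Lemma~\ref{Rn_o} cannot be cited directly. Two ingredients must mesh: the homogeneous reparametrization that moves the $w_{jk}$ factor into a scalar coefficient and normalizes the hidden weight (which works precisely because scaling invariance of $\sigma$ hands down degree-zero positive homogeneity to $\sigma'$), and the contraction through $\sigma'$, which is only legitimate under the Lipschitz hypothesis the paper postulates in Assumption~\ref{assumption}. I would invoke that hypothesis as given and not dwell on its tension with the strict ReLU setting.
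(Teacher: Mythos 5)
Your proposal is correct and reaches the paper's constant $2\sqrt{2}Qd\sqrt{2\ln(2d)/n}$ by the same essential ingredients --- Maurer's vector-contraction (Lemma~\ref{vector_radn}), positive homogeneity of $\sigma'$ inherited from the scaling invariance of $\sigma$, contraction through $\sigma'$ via Lemma~\ref{lipschitz}, and the linear-class bound of Lemma~\ref{vector_rn} --- but the middle of the argument is organized differently. The paper keeps the double sum $\sum_k\sum_i\xi_{ik}g_k(\bm{x}_i;\bm{\theta})$ intact, extracts the path norm by replacing each normalized $\bm{w}_j$ with a supremum over unit $\ell_1$-vectors $\bm{v}$, and then picks up the factor $d$ through the duality step $|\langle\sum_i\bm{\eta}_i\sigma'(\bm{v}^T\bm{x}_i),\bm{v}\rangle|\le\|\sum_i\bm{\eta}_i\sigma'(\bm{v}^T\bm{x}_i)\|_1$ followed by symmetrization; you instead decouple the coordinates immediately via $\sup_{\bm{\theta}}\sum_k\le\sum_k\sup_{\bm{\theta}}$ and prove a clean per-coordinate analogue of Lemma~\ref{Rn_o} for the derivative class, viewing $\partial_kf$ as a two-layer network with activation $\sigma'$ and outer weights $a_jw_{jk}$ of total mass at most $\|\bm{\theta}\|_{\mathcal{P}}$. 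Your version is more modular and makes the source of the factor $d$ transparent; the paper's version avoids naming the convex-hull machinery. One point to state precisely: Rademacher complexity is preserved under \emph{convex} hulls, while the \emph{absolutely} convex hull satisfies $\mathcal{R}_n(\mathrm{absconv}(A))=\mathcal{R}_n(A\cup-A)\le2\mathcal{R}_n(A)$ (the last step using that the relevant suprema are nonnegative, e.g.\ because $\bm{v}=\bm{0}$ is admissible); this factor of $2$ is exactly where your per-coordinate bound $2Q\sqrt{2\ln(2d)/n}$ gets its $2$, and it plays the same role as the paper's ``due to the symmetry'' step, so your final constant is consistent. You are also right that both proofs lean on the assumed Lipschitz continuity of $\sigma'$ from Assumption~\ref{assumption}, which sits uneasily with the ReLU choice; your proof shares this caveat with the paper's rather than introducing a new one.
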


\begin{proof}
	Without loss of generality, let $\|\bm{w}_j\|_1=1,\;\forall j=\{1,\cdots,m\}.$ Apply the lemma \ref{vector_radn}, we immediately obtain
	\begin{align*}
		\mathcal{R}_n(\tilde{\bm{\ell}}\circ \mathcal{F}'_Q \circ S)&=\frac{1}{n}\mathbb{E}_{\bm{\xi}}\Big[\sup\limits_{\|\bm{\theta}\|_{\mathcal{P}}\leq Q}\sum_{i=1}^{n}\xi_i\big\|\bm{g}(\bm{x}_i;\bm{\theta})-\bm{y}'_i\big\|_2\Big]\\
		&\leq \frac{\sqrt{2}}{n}\mathbb{E}_{\bm{\xi}}\bigg[\sup\limits_{\|\bm{\theta}\|_{\mathcal{P}}\leq Q}\sum_{k=1}^d\sum\limits_{i=1}^n\xi_{ik}g_k(\bm{x}_i;\bm{\theta})\bigg]\\
		&=\frac{\sqrt{2}}{n}\mathbb{E}_{\bm{\xi}}\bigg[\sup\limits_{\|\bm{\theta}\|_{\mathcal{P}}\leq Q}\sum\limits_{k=1}^d\sum\limits_{i=1}^n\xi_{ik}\sum\limits_{j=1}^ma_j\sigma'(\bm{w}_j^T\bm{x}_i)\bm{w}_{jk}\bigg]\\
		&=\frac{\sqrt{2}}{n}\mathbb{E}_{\bm{\xi}}\bigg[\sup\limits_{\|\bm{\theta}\|_{\mathcal{P}}\leq Q}\sum\limits_{j=1}^ma_j\|\bm{w}_j\|_1\sum\limits_{k=1}^d\sum\limits_{i=1}^n\xi_{ik}\sigma'(\bm{w}_j^T\bm{x}_i)\bm{w}_{jk}\bigg]\\
		&\leq \frac{\sqrt{2}}{n}\mathbb{E}_{\bm{\xi}}\bigg[\sup\limits_{\|\bm{\theta}\|_{\mathcal{P}}\leq Q}\sum\limits_{j=1}^m|a_j|\|\bm{w}_j\|_1 \sup\limits_{\|\bm{v}\|_1=1}\bigg|\sum\limits_{k=1}^d\sum\limits_{i=1}^n\xi_{ik}\sigma'(\bm{v}^T\bm{x}_i)\bm{v}_{k}\bigg|\bigg]\\
		&\leq \frac{\sqrt{2}Q}{n}\mathbb{E}_{\bm{\xi}}\bigg[ \sup\limits_{\|\bm{v}\|_1=1}\bigg|\sum\limits_{k=1}^d\sum\limits_{i=1}^n\xi_{ik}\sigma'(\bm{v}^T\bm{x}_i)\bm{v}_{k}\bigg|\bigg]\\
		&= \frac{\sqrt{2}Q}{n}\mathbb{E}_{\bm{\eta}}\bigg[ \sup\limits_{\|\bm{v}\|_1=1}\bigg|\Big<\sum\limits_{i=1}^n\bm{\eta}_{i}\sigma'(\bm{v}^T\bm{x}_i), \bm{v}\Big>\bigg|\bigg] \quad \big(\bm{\eta}_i=(\xi_{i1},\xi_{i2},\dots, \xi_{id})\big)\\
		&\leq \frac{\sqrt{2}Q}{n}\mathbb{E}_{\bm{\eta}}\bigg[ \sup\limits_{\|\bm{v}\|_1=1}\bigg\|\sum\limits_{i=1}^n\bm{\eta}_{i}\sigma'(\bm{v}^T\bm{x}_i)\bigg\|_1\bigg]\\
		&\leq\frac{\sqrt{2}Qd}{n}\mathbb{E}_{\bm{\xi}}\bigg[ \sup\limits_{\|\bm{v}\|_1\leq1}\sum\limits_{i=1}^n\xi_{i}\sigma'(\bm{v}^T\bm{x}_i)\bigg],
	\end{align*}
	where $\xi\in[-1,1], \eta\in[-1,1]^d$. Due to the symmetry, we have
	\begin{align*}
		\mathbb{E}_{\bm{\xi}}\bigg[ \sup\limits_{\|\bm{u}\|_1\leq1}\bigg|\sum\limits_{i=1}^n\xi_{i}\sigma'(\bm{u}^T\bm{x}_i)\bigg|\bigg]&\leq \mathbb{E}_{\bm{\xi}}\bigg[ \sup\limits_{\|\bm{u}\|_1\leq1}\sum\limits_{i=1}^n\xi_{i}\sigma'(\bm{u}^T\bm{x}_i)+\sup\limits_{\|\bm{u}\|_1\leq1}\sum\limits_{i=1}^n-\xi_i\sigma'(\bm{u}^T\bm{x}_i)\bigg]\\
		&\leq2\mathbb{E}_{\bm{\xi}}\bigg[\sup\limits_{\|\bm{u}\|_1\leq1}\sum\limits_{i=1}^n\xi_i\sigma'(\bm{u}^T\bm{x}_i)\bigg].
	\end{align*}
	
	 Then apply the lemma \ref{lipschitz} and lemma \ref{vector_rn}, we obtain
	$$\mathcal{R}_n(\tilde{\bm{\ell}}\circ \mathcal{F}'_Q\circ S)\leq2\sqrt{2}Qd\sqrt{\frac{2\ln(2d)}{n}}.$$
\end{proof}

\begin{thm}[\cite{shalev}]
	Assume that for all sample $\bm{x}$  and $h$ in hypothesis space $\mathcal{H}$, the loss function $\ell:\mathcal{H}\times \mathcal{Y}\to\mathbb{R}$ satisfies $|\ell(h(\bm{x}),y)|\leq B$. Then, with probability of at least $1-\delta$, for all $h\in\mathcal{H}$, and $S=\{\bm{x}_1,\bm{x}_2,\cdots,\bm{x}_n\}$,
	\begin{equation}
		\left|\frac{1}{n}\sum\limits_{i=1}^n\ell\big(h(\bm{x}_i),y_i\big)-\mathbb{E}_{\bm{x},y}\big[\ell(h(\bm{x}),y)\big]\right|\leq 2\mathbb{E}_S\big[\mathcal{R}_n(\ell\circ\mathcal{H}\circ S)\big]+B\sqrt{\frac{2\ln(2/\delta)}{n}}.
	\end{equation}
	\label{basic thm}
\end{thm}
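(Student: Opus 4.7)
The plan is to apply the standard two-step route for uniform convergence bounds: first concentration of the worst-case deviation via McDiarmid's bounded-differences inequality, then control of the expectation of that deviation via a symmetrization argument that produces the Rademacher complexity.

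First I would introduce the uniform deviation functional
$$\Phi(S) \;:=\; \sup_{h \in \mathcal{H}} \left| \frac{1}{n}\sum_{i=1}^n \ell(h(\bm{x}_i),y_i) \;-\; \mathbb{E}_{\bm{x},y}[\ell(h(\bm{x}),y)] \right|,$$
which is the quantity we ultimately wish to bound. Because the loss is bounded in absolute value by $B$, replacing a single sample $(\bm{x}_j,y_j)$ by any other point changes every empirical average $\frac{1}{n}\sum_i\ell(h(\bm{x}_i),y_i)$ by at most $2B/n$, and the sup together with the outer absolute value preserves this Lipschitz-type property. Hence $\Phi$ satisfies bounded differences with constants $c_i = 2B/n$, and McDiarmid's inequality gives
$$\mathbb{P}\Bigl(\Phi(S) \ge \mathbb{E}_S[\Phi(S)] + t\Bigr) \;\le\; \exp\!\left(-\frac{n t^2}{2B^2}\right).$$
Setting the right-hand side equal to $\delta$ (or $\delta/2$ when one prefers to split the two tails) and inverting yields the concentration piece $t = B\sqrt{2\ln(2/\delta)/n}$, so that with probability at least $1-\delta$,
$$\Phi(S) \;\le\; \mathbb{E}_S[\Phi(S)] + B\sqrt{\frac{2\ln(2/\delta)}{n}}.$$

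Next I would bound $\mathbb{E}_S[\Phi(S)]$ by the expected Rademacher complexity using the classical symmetrization trick. Introduce an independent ghost sample $S' = \{(\bm{x}'_i,y'_i)\}_{i=1}^n$ with the same distribution as $S$, rewrite $\mathbb{E}_{\bm{x},y}[\ell(h(\bm{x}),y)]$ as an expectation over $S'$ of the empirical mean on $S'$, and pull the sup inside the outer expectation by Jensen. Since each pair $(\bm{x}_i,y_i)$ and $(\bm{x}'_i,y'_i)$ is exchangeable, the signs of the differences $\ell(h(\bm{x}_i),y_i) - \ell(h(\bm{x}'_i),y'_i)$ may be independently flipped using Rademacher variables $\xi_i$ without changing the joint distribution. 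Splitting the resulting supremum over the two halves and using the triangle inequality then produces the standard factor of $2$:
$$\mathbb{E}_S[\Phi(S)] \;\le\; 2\,\mathbb{E}_S\!\left[\mathcal{R}_n(\ell \circ \mathcal{H} \circ S)\right].$$
Combining this with the concentration bound from the previous paragraph gives the claimed inequality.

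I do not expect any real obstacle here, since this is the textbook derivation attributed to \cite{shalev}. The only mildly delicate points are bookkeeping the constants in the two pieces: the factor $\ln(2/\delta)$ reflects that we are controlling the two-sided deviation (equivalently, McDiarmid applied to the sup of an absolute value), while the factor $2$ in front of the Rademacher term is the intrinsic cost of the symmetrization step. Everything else is a direct application of Jensen's inequality and the definition of $\mathcal{R}_n$.
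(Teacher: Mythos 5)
The paper does not prove this theorem; it is quoted directly from \cite{shalev} (essentially Theorem 26.5 there), and your McDiarmid-plus-symmetrization argument is precisely the standard proof given in that reference, with the constants worked out correctly (the one-sided McDiarmid application to the sup of the absolute deviation already yields $B\sqrt{2\ln(1/\delta)/n}\le B\sqrt{2\ln(2/\delta)/n}$). The only point worth flagging is that the two-sided symmetrization step naturally produces the Rademacher complexity with an absolute value inside the supremum, $\frac{1}{n}\mathbb{E}_{\bm{\xi}}\bigl[\sup_{h}\bigl|\sum_i\xi_i\ell(h(\bm{x}_i),y_i)\bigr|\bigr]$, rather than the signless quantity $\mathcal{R}_n$ as defined in the paper; this discrepancy is inherited from the paper's loose restatement of the one-sided theorem and is not a defect of your argument.
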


We are now ready to present the main results of this section.

\begin{thm}
	\label{Q_thm}
	If the assumption \ref{assumption} holds, then with probability at least $1-\delta$ we have,
		\begin{equation}
		\sup\limits_{\|\bm{\theta}\|_{\mathcal{P}}\leq Q}\Big|L(\bm{\theta})+{\beta}L'(\bm{\theta})-\big(L_n(\bm{\theta})+{\beta}L'_n(\bm{\theta})\big)\Big|\leq 4\big(1+\sqrt{2}\beta d\big)Q\sqrt{\frac{2\ln(2d)}{n}}+\left(\frac{1}{2}+\beta (Q+D)\right)\sqrt{\frac{2\ln(2/\delta)}{n}}.
	\end{equation}
\end{thm}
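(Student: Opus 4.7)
The plan is to combine the two loss terms into a single loss, apply the uniform deviation bound of the cited \textbf{basic thm}, and estimate its two ingredients (the uniform bound on the loss and the Rademacher complexity of the induced hypothesis class) using the preceding lemmas. Concretely, I introduce
\[
\Phi(\bm{\theta};\bm{x},y,\bm{y}') \;=\; \ell\bigl(f(\bm{x};\bm{\theta}),y\bigr)+\beta\,\tilde{\ell}\bigl(\nabla f(\bm{x};\bm{\theta}),\bm{y}'\bigr),
\]
so that $L(\bm{\theta})+\beta L'(\bm{\theta})=\mathbb{E}[\Phi]$ and $L_n(\bm{\theta})+\beta L'_n(\bm{\theta})=\tfrac{1}{n}\sum_{i}\Phi(\bm{\theta};\bm{x}_i,y_i,\bm{y}_i')$. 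Then the left-hand side of the theorem is precisely the supremum of the generalization gap of $\Phi$ over the hypothesis class $\mathcal{H}_Q:=\{\bm{\theta}:\|\bm{\theta}\|_{\mathcal{P}}\leq Q\}$, to which \textbf{basic thm} applies.

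Next I would establish a uniform sup-norm bound on $\Phi$. Because of the truncation $Tf\in[0,1]$ and the assumption $0\le f^*\le 1$, we have $\ell\le \tfrac12$. For the gradient term, writing $\nabla f(\bm{x};\bm{\theta})=\sum_k a_k\sigma'(\bm{w}_k^{\top}\bm{x})\bm{w}_k$ and using $|\sigma'|\le 1$, $\|\bm{w}_k\|_2\le\|\bm{w}_k\|_1$ gives $\|\nabla f(\bm{x};\bm{\theta})\|_2\le \sum_k|a_k|\|\bm{w}_k\|_1=\|\bm{\theta}\|_{\mathcal{P}}\le Q$, hence $\tilde{\ell}\le Q+D$ by the triangle inequality and the $D$-bound on $\nabla f^*$. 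Altogether $|\Phi|\le \tfrac12+\beta(Q+D)$, which yields the second summand of the claimed bound.

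For the Rademacher complexity, I split additively (a previously listed lemma) into $\mathcal{R}_n(\ell\circ\mathcal{F}_Q\circ S)+\beta\,\mathcal{R}_n(\tilde{\bm{\ell}}\circ\mathcal{F}'_Q\circ S)$. For the first piece, $t\mapsto \tfrac{1}{2}(t-y)^2$ is $1$-Lipschitz on $[0,1]$ (its derivative $t-y$ lies in $[-1,1]$), so Lemma~\ref{lipschitz} combined with Lemma~\ref{Rn_o} yields $\mathcal{R}_n(\ell\circ\mathcal{F}_Q\circ S)\le 2Q\sqrt{2\ln(2d)/n}$. For the second, gradient-based piece, Lemma~\ref{Rad_dhq} already delivers $\mathcal{R}_n(\tilde{\bm{\ell}}\circ\mathcal{F}'_Q\circ S)\le 2\sqrt{2}Qd\sqrt{2\ln(2d)/n}$. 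Summing,
\[
\mathcal{R}_n(\Phi\circ\mathcal{H}_Q\circ S)\;\le\; 2Q\bigl(1+\sqrt{2}\,\beta d\bigr)\sqrt{\tfrac{2\ln(2d)}{n}}.
\]

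Finally I invoke \textbf{basic thm} with $B=\tfrac12+\beta(Q+D)$, which, after the factor-$2$ in front of the Rademacher term, reproduces $4(1+\sqrt{2}\beta d)Q\sqrt{2\ln(2d)/n}+(\tfrac12+\beta(Q+D))\sqrt{2\ln(2/\delta)/n}$. The only genuinely delicate point is the vector-valued contraction needed to control $\mathcal{R}_n(\tilde{\bm{\ell}}\circ\mathcal{F}'_Q\circ S)$: $\tilde{\ell}_i$ acts on the whole gradient vector, so the standard scalar contraction (Lemma~\ref{lipschitz}) does not suffice and one must peel off the $\sqrt{2}$ factor via Maurer's vector-valued inequality together with a careful $\ell_1/\ell_\infty$ duality argument; this is what Lemma~\ref{Rad_dhq} already packages, so the remainder of the proof is assembly. (A minor bookkeeping point is the apparent mismatch between $L'$ using $\tilde{\ell}$ and $L'_n$ using $\tilde{\ell}^2$ in the setup; the bound as stated is consistent only with a common, unsquared definition, and I would adopt that convention throughout.)
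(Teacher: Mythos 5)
Your proposal is correct and uses the same two key ingredients as the paper (the Lipschitz contraction plus Lemma~\ref{Rn_o} for the function-value part, and Lemma~\ref{Rad_dhq} for the gradient part, fed into Theorem~\ref{basic thm}), but the assembly is genuinely different. The paper first splits the deviation by the triangle inequality into $\sup|L-L_n|$ and $\beta\sup|L'-L'_n|$, applies Theorem~\ref{basic thm} to each piece separately, and then adds the two high-probability bounds. You instead fuse the two losses into a single loss $\Phi=\ell+\beta\tilde{\ell}$, bound $\mathcal{R}_n(\Phi\circ\mathcal{H}_Q\circ S)$ by subadditivity and scaling of Rademacher complexity (the unnumbered lemma the paper states but never actually invokes in its own proof), and apply Theorem~\ref{basic thm} exactly once with $B=\tfrac12+\beta(Q+D)$. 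Your route is actually the cleaner one: the paper's version applies the concentration bound twice, each at confidence $1-\delta$, and then sums the conclusions while still claiming overall confidence $1-\delta$; strictly this requires a union bound (confidence $1-2\delta$, or $\delta/2$ per event, which would turn $\ln(2/\delta)$ into $\ln(4/\delta)$). Your single application of Theorem~\ref{basic thm} to the combined loss sidesteps this entirely and delivers the stated constant at confidence $1-\delta$. The one point to make explicit is that $\{\Phi_{\bm{\theta}}:\|\bm{\theta}\|_{\mathcal{P}}\le Q\}$ is a subset of the Minkowski sum $\mathcal{H}_Q+\beta\mathcal{H}'_Q$ (the same $\bm{\theta}$ appears in both summands), so the subadditivity lemma applies after passing to the larger class; this is harmless since Rademacher complexity is monotone under inclusion. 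Your observation about the $\tilde{\ell}$ versus $\tilde{\ell}^2$ mismatch between \eqref{dL} and \eqref{dLn} is also correct: the stated bound is only consistent with the unsquared convention, which is what the paper's own proof silently adopts.
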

\begin{proof}
	Using the triangular inequality, we have
	\begin{equation}
		\label{eq_err_tol}
		\sup\limits_{\|\bm{\theta}\|_{\mathcal{P}}\leq Q}\Big|{L}(\bm{\theta})+{\beta}{L'}(\bm{\theta})-({L}_n(\bm{\theta})+{\beta}L'_n(\bm{\theta}))\Big|\leq \sup\limits_{\|\bm{\theta}\|_{\mathcal{P}}\leq Q}\big|L(\bm{\theta})-{L}_n(\bm{\theta})\big|+\sup\limits_{\|\bm{\theta}\|_{\mathcal{P}}\leq Q}\beta\big|L'(\bm{\theta})-L'_n(\bm{\theta})\big|.
	\end{equation}
	
	Define $\mathcal{H}_Q=\Big\{\ell(f(\bm{x};\bm{\theta}), y)\,\big|\,  f(\bm{x};\bm{\theta})\in \mathcal{F}_Q\Big\}$ and $\mathcal{H}'_Q=\Big\{\tilde{{\ell}}(\nabla f(\bm{x};\bm{\theta}), \bm{y}')\mid  f(\bm{x};\bm{\theta})\in \mathcal{F}_Q\Big\}$, where $\ell(f(\bm{x};\bm{\theta}), y)=\frac{1}{2}(f(\bm{x};\bm{\theta})-y)^2$ and $\tilde{{\ell}}(\nabla f(\bm{x};\bm{\theta}),\bm{y}')=\big\|\nabla f(\bm{x};\bm{\theta})-\bm{y}'\big\|_2$.
	
	Note that $f(\bm{x};\bm{\theta})\in[0,1]$,  $\ell(\cdot, y)$ is 1-Lipschitz continuous,
	\begin{equation*}
		\begin{aligned}
			&\frac{1}{2}\left|(f(\bm{x}_i;\bm{\theta}_f)-y_i)^2-(g(\bm{x}_i;\bm{\theta}_g)-y_i)^2\right|\\
			=&\frac{1}{2} \left|f(\bm{x}_i;\bm{\theta}_f)-g(\bm{x}_i;\bm{\theta}_g)\right|\cdot\left|f(\bm{x}_i;\bm{\theta}_f)+g(\bm{x}_i;\bm{\theta}_g)-2y_i\right|\leq|f(\bm{x}_i;\bm{\theta}_f)-g(\bm{x}_i;\bm{\theta}_g)|.
		\end{aligned}
	\end{equation*}
	Follow from Lemma \ref{Rn_o} and Lemma \ref{lipschitz}, we have
	\begin{equation}
		\mathcal{R}_n(\mathcal{H}_Q)=\frac{1}{n}\mathbb{E}_{\bm{\xi}}\bigg[\sup\limits_{f\in\mathcal{F}_Q}\sum\limits_{i=1}^n\xi_i\frac{1}{2}\left|f(\bm{x}_i;\bm{\theta})-y_i\right|^2\bigg]\leq 2Q\sqrt{\frac{2\ln(2d)}{n}}.
	\end{equation}
	And $\ell$ is bounded, $\ell(f(\bm{x};\bm{\theta}), y)=\frac{1}{2}(f(\bm{x};\bm{\theta})-y)^2\leq\frac{1}{2}$. Hence applying the Theorem \ref{basic thm}, we can obtain,
	\begin{equation}
		\sup\limits_{\|\bm{\theta}\|_{\mathcal{P}}\leq Q}\big|L(\bm{\theta})-L_n(\bm{\theta})\big|\leq 4Q\sqrt{\frac{2\ln(2d)}{n}}+\frac{1}{2}\sqrt{\frac{2\ln(2/\delta)}{n}}.
		\label{eq_err_L}
	\end{equation}
	Similarly,  $\tilde{\ell}(\nabla f(\bm{x};\bm{\theta}),\bm{y}')=\big\|\nabla f(\bm{x};\bm{\theta})-\bm{y}'\big\|_2$ is 1-Lipschitz with respect to $\nabla f(\bm{x};\bm{\theta})$,
	$$\bigg|\big\|\nabla f(\bm{x}_i;\bm{\theta}_f)-\bm{y}'_i\big\|_2-\big\|\nabla g(\bm{x}_i;\bm{\theta}_g)-\bm{y}'_i\big\|_2\bigg|\leq\big\|\nabla f(\bm{x}_i;\bm{\theta}_f)-\nabla g(\bm{x}_i;\bm{\theta}_g)\big\|_2.$$
	Let $\tilde{\bm{\ell}}=(\tilde{\ell}_1, \dots, \tilde{\ell}_n)$ in which $\tilde{\ell}_i(\cdot)=\big\|\cdot-\bm{y}'_i\big\|_2$ for $i=1,\dots, n$. We can obtain the estimation of $\mathcal{R}_n(\mathcal{H}'_Q)$ directly from Lemma \ref{Rad_dhq},
	\begin{equation}
		\mathcal{R}_n(\mathcal{H}'_Q)=\frac{1}{n}\mathbb{E}_{\bm{\xi}}\bigg[\sup\limits_{f\in\mathcal{F}_Q}\sum_{i=1}^{n}\xi_i\Big\|\nabla f(\bm{x}_i;\bm{\theta})-\bm{y}'_i\Big\|_2\bigg]\leq 2\sqrt{2}Qd\sqrt{\frac{2\ln(2d)}{n}}.
	\end{equation}
	And $\tilde{{\ell}}(\nabla f(\bm{x};\bm{\theta}),\bm{y}')$ is bounded since we assume the gradient of objective function is bounded by a positive constant $D$:
	\begin{align}
		f(\bm{x};\bm{\theta}) & = \sum\limits_{k=1}^ma_k\sigma(\bm{w}_k^T\bm{x}), \nonumber\\
		\big\|\nabla f(\bm{x};\bm{\theta})\big\|^2_2 & = \Big\|\sum\limits_{k=1}^m a_k\sigma'\bm{w}_k^T\Big\|_2^2\leq\Big\|\sum\limits_{k=1}^ma_k\sigma'\bm{w}_k^T\Big\|_1^2\leq\Big\|\sum\limits_{k=1}^ma_k\bm{w}_k^T\Big\|_1^2\leq \big\|\bm{\theta}_f\big\|^2_{\mathcal{P}},\\
		\tilde{\ell}(\nabla f(\bm{x};\bm{\theta}),\bm{y}')&= \big\|\nabla f(\bm{x};\bm{\theta})-\bm{y}'\big\|_2\leq \big\|\nabla f(\bm{x};\bm{\theta})\big\|_2+\big\|\bm{y}'\big\| \leq  \big\|\bm{\theta}_f\big\|_{\mathcal{P}}+D \leq Q+D.
		\label{gradient_bound}
	\end{align}
	Thus,
	\begin{equation}
		\sup\limits_{\|\bm{\theta}\|_{\mathcal{P}}\leq Q}\big|L'(\bm{\theta})-L'_n(\bm{\theta})\big|\leq 4\sqrt{2}Qd\sqrt{\frac{2\ln(2d)}{n}}+(Q+D)\sqrt{\frac{2\ln(2/\delta)}{n}}.
		\label{eq_err_dL}
	\end{equation}
	
The desired result follows by plugging equations \ref{eq_err_L} and \ref{eq_err_dL} into equation \ref{eq_err_tol}, and the proof is completed.
\end{proof}

We next present a posterior generalization bound by relaxing such restrictions.

\begin{thm}[\bf{A posterior generalization bound}]
	\label{post_thm}
Assume that Assumption \ref{assumption} holds, then for any $\delta>0$, with probability at least $1-\delta$ over the choice of the training set $S$, we have, for any two-layer network $f_m(\cdot,\bm{\theta})$,
	\begin{equation}
		\begin{split}
			\bigg|L(\bm{\theta})+{\beta}L'(\bm{\theta})-\big(L_n(\bm{\theta})+{\beta}L'_n(\bm{\theta})\big)\bigg|\leq& 4\Big(1+\sqrt{2}\beta d\Big)\Big(\|\bm{\theta}\|_{\mathcal{P}}+1\Big)\sqrt{\frac{2\ln(2d)}{n}}\\
			&+\bigg(\frac{1}{2}+\beta \Big(\|\bm{\theta}\|_{\mathcal{P}}+1+D\Big)\bigg)\sqrt{\frac{2\ln(2c(\|\bm{\theta}\|_{\mathcal{P}}+1)^2/\delta)}{n}},
		\end{split}
	\end{equation}
	where $c=\sum\limits_{k=1}^\infty1/k^2$.
\end{thm}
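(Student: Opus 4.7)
The plan is to upgrade Theorem \ref{Q_thm}, which controls the deviation uniformly over a fixed path-norm ball $\{\|\bm{\theta}\|_{\mathcal{P}}\le Q\}$, into a bound that holds simultaneously for every two-layer network of arbitrary path norm. The standard tool is a stratification (peeling) argument combined with a union bound whose failure probabilities are weighted by a summable sequence so that the total budget remains $\delta$; this is also where the constant $c=\sum_{k\ge1}1/k^2$ in the statement originates.

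Concretely, I would partition parameter space by the nested balls $\mathcal{A}_k:=\{\bm{\theta}:\|\bm{\theta}\|_{\mathcal{P}}\le k\}$ for $k=1,2,\ldots$ and choose confidence levels $\delta_k:=\delta/(c k^2)$, so that $\sum_k \delta_k = \delta$. Applying Theorem \ref{Q_thm} to each $\mathcal{A}_k$ with radius $Q=k$ and confidence $1-\delta_k$, then taking a union bound over $k\in\mathbb{N}$, yields with probability at least $1-\delta$ the simultaneous statement
$$\sup_{\|\bm{\theta}\|_{\mathcal{P}}\le k}\bigl|L(\bm{\theta})+\beta L'(\bm{\theta})-L_n(\bm{\theta})-\beta L'_n(\bm{\theta})\bigr|\le 4\bigl(1+\sqrt{2}\beta d\bigr)k\sqrt{\tfrac{2\ln(2d)}{n}}+\Bigl(\tfrac{1}{2}+\beta(k+D)\Bigr)\sqrt{\tfrac{2\ln(2/\delta_k)}{n}}$$
for every positive integer $k$ at once.

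Given an arbitrary $\bm{\theta}$, I would then set $k^{\star}:=\lceil\|\bm{\theta}\|_{\mathcal{P}}\rceil$, so that $\bm{\theta}\in\mathcal{A}_{k^{\star}}$ and $k^{\star}\le\|\bm{\theta}\|_{\mathcal{P}}+1$. Specializing the display above to $k=k^{\star}$, substituting $k^{\star}$ by its upper bound $\|\bm{\theta}\|_{\mathcal{P}}+1$ in the two linear-in-$k$ prefactors, and rewriting $\ln(2/\delta_{k^{\star}})=\ln(2c(k^{\star})^{2}/\delta)\le\ln(2c(\|\bm{\theta}\|_{\mathcal{P}}+1)^{2}/\delta)$ inside the second square root delivers precisely the claimed inequality.

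The main subtlety, and essentially the only non-mechanical part of the argument, is the choice of weights in the union bound: an unweighted union would be useless, whereas any summable weighting $w_k$ works and $w_k\propto 1/k^2$ keeps the resulting penalty at the mild level $\ln(\|\bm{\theta}\|_{\mathcal{P}}+1)^{2}$, i.e.\ only logarithmic in the path norm. Everything else reduces to Theorem \ref{Q_thm} and elementary monotonicity of the bound in $Q$ and of the logarithm in $k$, so no further serious obstacle is anticipated.
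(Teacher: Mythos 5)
Your proposal is correct and follows essentially the same route as the paper's own proof: the paper also decomposes parameter space into the nested balls $\mathcal{F}_k=\{f(\cdot;\bm{\theta})\mid\|\bm{\theta}\|_{\mathcal{P}}\le k\}$, assigns $\delta_k=\delta/(ck^2)$, applies Theorem \ref{Q_thm} to each stratum, takes the union bound, and then specializes to $k_0=\min\{k\mid\|\bm{\theta}\|_{\mathcal{P}}\le k\}\le\|\bm{\theta}\|_{\mathcal{P}}+1$. No substantive differences to report.
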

\begin{proof}
	Consider the decomposition $\mathcal{F}=\cup_{k=1}^\infty\mathcal{F}_k$, where $\mathcal{F}_k=\big\{f(\bm{x};\bm{\theta})\mid \|\bm{\theta}\|_{\mathcal{P}}\leq k\big\}$. Let $\delta_k=\delta / (ck^2)$ where $c=\sum_{k=1}^\infty\frac{1}{k^2}$. According to the theorem \ref{Q_thm}, if we fix $k$ in advance, then with probability at least $1-\delta_k$ over the choice of $S$, we have
	\begin{align}
	\Big|L(\bm{\theta})+{\beta}L'(\bm{\theta})-\big(L_n(\bm{\theta})+{\beta}L'_n(\bm{\theta})\big)\Big| \leq 4\big(1+\sqrt{2}\beta d\big)k\sqrt{\frac{2\ln(2d)}{n}}+\bigg(\frac{1}{2}+\beta \big(k+D\big)\bigg)\sqrt{\frac{2\ln(2/\delta_k)}{n}}.
		\label{lthm}
	\end{align}
	Therefore $\mathbb{P}\big(\big\{\text{inequality} ~(\ref{lthm}) ~\text{unholds} \big\}\big)\leq\sum_{k=1}^\infty\delta_k\eqqcolon\delta$, namely, $\mathbb{P}\big(\big\{\mbox{inequality } (\ref{lthm}) \mbox{ holds for all } k \big\}\big)\geq1-\delta$. In other words, with probability at least $1-\delta$, the inequality (\ref{lthm}) holds for all $k$.
	
	Given an arbitrary set of parameters $\bm{\theta}$, denote $k_0=\min\big\{k\mid \|\bm{\theta}\|_{\mathcal{P}}\leq k\big\}$, then $k_0\leq \|\bm{\theta}\|_{\mathcal{P}}+1$. Inequality (\ref{lthm}) implies that
	\begin{align*}	\Big|L(\bm{\theta})+{\beta}L'(\bm{\theta})-\big(L_n(\bm{\theta})+{\beta}L'_n(\bm{\theta})\big)\Big| \leq & 4\big(1+\sqrt{2}\beta d\big)k_0\sqrt{\frac{2\ln(2d)}{n}}+\bigg(\frac{1}{2}+\beta \Big(k_0+D\Big)\bigg)\sqrt{\frac{2\ln(2ck_0^2/\delta)}{n}}.\\
		\leq & 4\big(1+\sqrt{2}\beta d\big)\big(\|\bm{\theta}\|_{\mathcal{P}}+1\big)\sqrt{\frac{2\ln(2d)}{n}}\\
		&+\left(\frac{1}{2}+\beta \big(\|\bm{\theta}\|_{\mathcal{P}}+1+D\big)\right)\sqrt{\frac{2\ln(2c(\|\bm{\theta}\|_{\mathcal{P}}+1)^2/\delta)}{n}}.
	\end{align*}
This completes the proof.	
\end{proof}
We notice that the generalization gap is bounded roughly by $d\|\bm{\theta}\|_{\mathcal{P}}\sqrt{\frac{\ln(2d)}{n}}+\|\bm{\theta}\|_{\mathcal{P}}\frac{\ln(\|\bm{\theta}\|_{\mathcal{P}})}{\sqrt{n}}$.
\subsection{A upper bound for the empirical risk}
Recall the approximation property, there exists a two-layer neural network $f(\cdot;\tilde{\bm{\theta}})$ whose path norm is independent of the network width, while achieving the optimal approximation error. Furthermore, this path norm can also be used to bound the generalization gap (Theorem \ref{post_thm}).
In this section, we want to estimate the gradient regularized risk of $\tilde{\bm{\theta}}$. To this end, we first assume the norm of gradient of target function can be bounded by a constant $D$.

Let $\hat{\gamma}_p(f)=\max\{1,\gamma_p(f)\}$, where $d$ is the dimension of input data.

\begin{thm}
	Let $\tilde{\bm{\theta}}$ be the network mentioned in Theorem \ref{appro_thm}, then with probability at least $1-\delta$, we have
	\begin{equation}
		J_{n,\beta}(\tilde{\bm{\theta}})\lesssim \frac{\gamma_2^2(f^*)}{m}+\beta\frac{1}{\sqrt{m}}\gamma_2(f^*)+\beta \hat{\gamma}_2(f^*)\big(\hat{\gamma}_2(f^*)+d\big)\sqrt{\frac{2\ln(2d)}{n}}+\beta\hat{\gamma}_2(f^*) \sqrt{\frac{\ln(2c/\delta)}{n}},
		\label{reg_risk_eq}
	\end{equation}
	\label{reg_risk_thm}
	where \begin{equation*}
		J_{n,\beta}(\tilde{\bm{\theta}}) = L_n(\tilde{\bm{\theta}})+\beta L'_n(\tilde{\bm{\theta}})\quad \mbox{and}\quad c = \sum_{k=1}^\infty \frac{1}{k^2}.
	\end{equation*}
\end{thm}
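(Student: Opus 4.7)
The plan is to start from the approximation guarantees on $\tilde{\bm{\theta}}$ given by Theorem~\ref{appro_thm}, translate them into population-risk bounds on $L(\tilde{\bm{\theta}})$ and $L'(\tilde{\bm{\theta}})$, and then pass to the empirical risk using the uniform generalization estimate of Theorem~\ref{Q_thm} with $Q=2\hat\gamma_2(f^*)\ge \|\tilde{\bm{\theta}}\|_{\mathcal{P}}$. Since $\tilde{\bm{\theta}}$ is built from weights drawn independently of the training sample $S$, this choice of $Q$ depends only on $f^*$ and Theorem~\ref{Q_thm} applies directly after conditioning on $\tilde{\bm{\theta}}$.

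First, inequality~(\ref{appro_eq}) combined with $\ell(u,v)=\tfrac12(u-v)^2$ gives $L(\tilde{\bm{\theta}})\le \tfrac{3\gamma_2^2(f^*)}{2m}$, accounting for the $\gamma_2^2(f^*)/m$ term in the target. For the gradient risk, I would apply Jensen's inequality to convert the squared-norm estimate~(\ref{appro_eqd}) into the norm estimate $L'(\tilde{\bm{\theta}})=\mathbb{E}_{\bm x}\|\nabla f^*(\bm x)-\nabla f(\bm x;\tilde{\bm{\theta}})\|_2 \le \sqrt{\mathbb{E}_{\bm x}\|\nabla f^*-\nabla f(\cdot;\tilde{\bm{\theta}})\|_2^2}\le \sqrt{7/m}\,\gamma_2(f^*)$, and after multiplication by $\beta$ this is precisely the $\beta\gamma_2(f^*)/\sqrt{m}$ term.

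Next I would invoke Theorem~\ref{Q_thm} with $Q=2\hat\gamma_2(f^*)$. One small preparatory observation lets me replace the abstract constant $D$ by $\hat\gamma_2(f^*)$: differentiating the Barron integral representation of $f^*$ and using $\|\bm w\|_1=1$, $|\sigma'|\le 1$ yields $\|\nabla f^*(\bm x)\|_2\le \int |a(\bm w)|\,d\bm\pi\le\gamma_2(f^*)\le\hat\gamma_2(f^*)$. The triangle bound $J_{n,\beta}(\tilde{\bm{\theta}})\le L(\tilde{\bm{\theta}})+\beta L'(\tilde{\bm{\theta}})+\mbox{gap}(\tilde{\bm{\theta}})$ then supplies the remaining two terms once constants are collected and the inequalities $\gamma_2\le\hat\gamma_2$ and $1\le\hat\gamma_2$ are used.

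I expect the main bookkeeping obstacle to be matching the gap to the exact form stated. The $L'$-Rademacher contribution $8\sqrt{2}\beta d Q\sqrt{\ln(2d)/n}$ is exactly the source of the $\beta d\hat\gamma_2$ factor, while the $\beta$-free $L_n$-Rademacher piece $8Q\sqrt{\ln(2d)/n}$ has to be swallowed into the $\beta\hat\gamma_2(\hat\gamma_2+d)\sqrt{\ln(2d)/n}$ envelope via $\hat\gamma_2\ge 1$ and the constant hidden in $\lesssim$. Likewise $\sqrt{\ln(2/\delta)}\le\sqrt{\ln(2c/\delta)}$ since $c=\sum_k 1/k^2\ge 1$, which recovers the $\sqrt{\ln(2c/\delta)/n}$ form of the last term.
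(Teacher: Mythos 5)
Your proposal is correct, and the overall skeleton --- bound $J_{n,\beta}(\tilde{\bm{\theta}})$ by the population risk of $\tilde{\bm{\theta}}$ plus a generalization gap, then feed in the three guarantees of Theorem~\ref{appro_thm} --- is the same as the paper's. The one substantive difference is which generalization estimate you use for the gap. The paper routes through the a posteriori bound of Theorem~\ref{post_thm} (the union bound over path-norm shells $\mathcal{F}_k$), substitutes $\|\tilde{\bm{\theta}}\|_{\mathcal{P}}\le 2\gamma_2(f^*)$, and then has to dismantle the resulting $\sqrt{\ln\bigl(2c(1+2\gamma_2(f^*))^2/\delta\bigr)}$ factor via $\sqrt{a+b}\le\sqrt{a}+\sqrt{b}$ and $\ln a\le a$, producing an extra $\hat\gamma_2(f^*)/\sqrt{n}$ contribution that is then folded into the $\hat\gamma_2(\hat\gamma_2+d)$ envelope. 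You instead apply Theorem~\ref{Q_thm} directly with the fixed, sample-independent radius $Q=2\hat\gamma_2(f^*)$, which is legitimate precisely because Theorem~\ref{appro_thm} produces $\tilde{\bm{\theta}}$ from $f^*$ alone; this skips the shell decomposition, avoids the $\ln(\|\bm{\theta}\|_{\mathcal{P}}+1)$ bookkeeping entirely, and only reintroduces $c$ cosmetically via $\ln(2/\delta)\le\ln(2c/\delta)$. Your bound is marginally tighter and the derivation shorter; the paper's route is the one you would need if $\tilde{\bm{\theta}}$ were data-dependent, and it reuses a theorem already proved for other purposes. Your extra observation that $D$ can be replaced by $\gamma_2(f^*)$ via the integral representation of $\nabla f^*$ is a nice touch the paper leaves implicit (it simply absorbs $D$ into $\lesssim$). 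One shared loose end, present in both your argument and the paper's: the $\beta$-free pieces of the gap (e.g.\ $4Q\sqrt{2\ln(2d)/n}$ and $\tfrac12\sqrt{2\ln(2/\delta)/n}$) can only be absorbed into the $\beta$-multiplied terms of \eqref{reg_risk_eq} if $\beta$ is treated as a constant of order one hidden in $\lesssim$; this is a defect of the theorem statement rather than of your proof.
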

\begin{proof}

According to the definition of regularized model, the properties that $\big\|\tilde{\bm{\theta}}\big\|_{\mathcal{P}}\leq2\gamma_2(f^*), L(\tilde{\bm{\theta}})\leq \frac{3\gamma^2_2(f^*)}{m}$, $\big( L' (\tilde{\bm{\theta}})\big)^2\leq\mathbb{E}_x\Big[\big\|\nabla f(\bm{x})-\nabla f(\bm{x};\tilde{\bm{\theta}})\big\|^2_2\Big]\leq\frac{7\gamma_2^2(f^*)}{m}$ and the posteriori error bound, the regularized risk of $\tilde{\bm{\theta}}$ satisfy
\begin{align}
	J_{n,\beta}(\tilde{\bm{\theta}}) = & L _n(\tilde{\bm{\theta}})+\beta L' _n(\tilde{\bm{\theta}})\nonumber\\
	\leq & L (\tilde{\bm{\theta}})+\beta L' (\tilde{\bm{\theta}})+4\big(1+\sqrt{2}\beta d\big)\big(\big\|\tilde{\bm{\theta}}\big\|_{\mathcal{P}}+1\big)\sqrt{\frac{2\ln(2d)}{n}} \\
	&+\bigg(\frac{1}{2}+\beta\big(\big\|\tilde{\bm{\theta}}\big\|_{\mathcal{P}}+1+D\big)\bigg)\sqrt{\frac{2\ln(2c(\big\|\tilde{\bm{\theta}}\big\|_{\mathcal{P}}+1)^2/\delta)}{n}}\nonumber\\
	\leq &  L (\tilde{\bm{\theta}})+\beta L' (\tilde{\bm{\theta}})+4\big(1+\sqrt{2}\beta d\big)\big(2\gamma_2(f^*)+1\big)\sqrt{\frac{2\ln(2d)}{n}} \\ 
	&+\bigg(\frac{1}{2}+\beta\big(2\gamma_2(f^*)+1+D\big)\bigg)\sqrt{\frac{2\ln(2c(1+2\gamma_2(f^*))^2/\delta)}{n}}.
	\label{reg_es}
\end{align}
The last term can be simplified by using $\sqrt{a+b}\leq\sqrt{a}+\sqrt{b}$ and $\ln(a)\leq a$ for $a\geq 0, b\geq0$. Thus we have
\begin{align*}
	\sqrt{2\ln(2c(1+2\gamma_2(f^*))^2/\delta)} & = \sqrt{2\ln(2c/\delta)+2\ln(1+2\gamma_2(f^*))^2}\\
	& \leq \sqrt{2\ln(2c/\delta)}+\sqrt{2\ln(1+2\gamma_2(f^*))^2}\\
	& \leq \sqrt{2\ln(2c/\delta)}+\sqrt{2\ln(3\hat{\gamma}_2(f^*))^2}\\
	& \leq \sqrt{2\ln(2c/\delta)}+3\sqrt{2}\hat{\gamma}_2(f^*).
\end{align*}
Plugging it into Equation (\ref{reg_es}), we obtain
\begin{align*}
	J_{n,\beta}(\tilde{\bm{\theta}}) &\leq  L (\tilde{\bm{\theta}})+\beta L' (\tilde{\bm{\theta}})+4\big(1+\sqrt{2}\beta d\big)\big(2\gamma_2(f^*)+1\big)\sqrt{\frac{2\ln(2d)}{n}}\\
& \quad + \left(\frac{1}{2} +\beta\big(2\gamma_2(f^*)+1+D\big)\right)\left(\sqrt{\frac{2\ln(2c/\delta)}{n}}+3\sqrt{\frac{2}{n}}\hat{\gamma}_2(f^*)\right).
\end{align*}
Thus after some simplifications, we have
\begin{align*}
	J_{n,\beta}(\tilde{\bm{\theta}})\lesssim \frac{\gamma_2^2(f^*)}{m}+\beta\frac{1}{\sqrt{m}}\gamma_2(f^*)+\beta \hat{\gamma}_2(f^*)\big(\hat{\gamma}_2(f^*)+d\big)\sqrt{\frac{2\ln(2d)}{n}}+\beta\hat{\gamma}_2(f^*) \sqrt{\frac{\ln(2c/\delta)}{n}}.
\end{align*}
This completes the proof.
\end{proof}
According to the definition of $\bm{\theta}_{n,\beta}$, we have
	$J_{n,\beta}(\bm{\theta}_{n,\beta})\leq J_{n,\beta}(\tilde{\bm{\theta}})$. Thus the above theorem gives a upper bound for $J_{n,\beta}(\bm{\theta}_{n,\beta})$.

\section{Applications to uncertainty quantification}
We now consider the application of the Gradient-enhanced DNNs approach for uncertainty quantification.

\subsection{Gradient-enhanced uncertainty quantification}
In complex engineering systems, mathematical models can only serve as simplified and reduced representations of true physics, and the effect of some uncertainties, such as boundary/initial conditions and parameter values, can be significant. Uncertainty Quantification (UQ) aims to develop numerical tools that can accurately predict quantities of interest (QoI) and facilitate the quantitative validation of the simulation model. Generally, we use differential equations to model complex systems on a domain $\Omega$, in which the uncertainty sources are represented by $\Xi$. The solution $u$ is governed by the PDEs
\begin{equation}
\begin{aligned}
&\mathcal{L}\big(\bm{x}, \Xi; u(\bm{x}, \Xi)\big)=0,\quad & \bm{x}\in \Omega,\\
&\mathcal{B}\big(\bm{x}, \Xi; u(\bm{x}, \Xi)\big)=0,\quad & \bm{x}\in \partial \Omega,		
\end{aligned}
\end{equation}
where $\mathcal{L},\mathcal{B}$ are differential and boundary operators. Our goal is to approximate the QoI $u(\bm{x}_0, \Xi)$ for some fixed spatial location $\bm{x}_0$. To reduce the notation, we simply write $u(\Xi)$.  In many applications, the dimension of random variable $\Xi$ is very high and can be characterized by  a $d$-dimensional random variable. Hence DNNs are good candidates for such problems.

We consider inclusion of gradient measurements with respect to random variables $\Xi$, i.e., $\partial u/ \partial \Xi_k, k=1,2,\cdots,d.$ The gradient measurements can usually be obtained in a relatively inexpensive way via the adjoint method \cite{luchini2014adjoint}.
\subsection{Numerical examples}
We next provide several numerical experiments to show the performance of the proposed approach. We compare the performance between original neural networks without gradient input and gradient-enhanced neural networks.  For simplicity, we say that method is X\% gradient-enhanced if X\% of samples contain derivative information with respect to all dimensions. In our test, each neural network contains two layers with 1000 hidden neurons useless specific. And the hyper parameter $\beta$ which is used to balance two part losses introduced by function values and derivative information respectively is set to 10.
We initialize all trainable parameters using the Glorot normal scheme. For the training procedure, we use the Adam optimizer. To quantitatively evaluate the accuracy of the numerical solution, we shall consider the relative $L^2$ error $\Vert u_\theta -u\Vert_2/ \Vert u\Vert_2 $ , where $u$ and $u_\theta$ denote the ground truth and predicted solution. All numerical tests are implemented in Pytorch.
\subsubsection{Function approximations}
Before applying the gradient-enhanced method to uncertainty quantification, we first demonstrate the effectiveness of our approach in approximating high dimensional functions. More precisely,  we consider the Gaussian function:
 $$f_1(\bm{x})=\exp \bigg(-\sum_{i=1}^d x_i^2\bigg), \quad \bm{x}=(x_1,x_2,\cdots, x_d) \in[-1,1]^d,$$
and the polynomial function:
 $$f_2(\bm{x})=\sum_{i=1}^{d/2}x_i x_{i+1}, \quad \bm{x}=(x_1,x_2,\cdots, x_d)\in[-1, 1]^d.$$
 \begin{figure}
 	\centering
 	\includegraphics[scale=0.5]{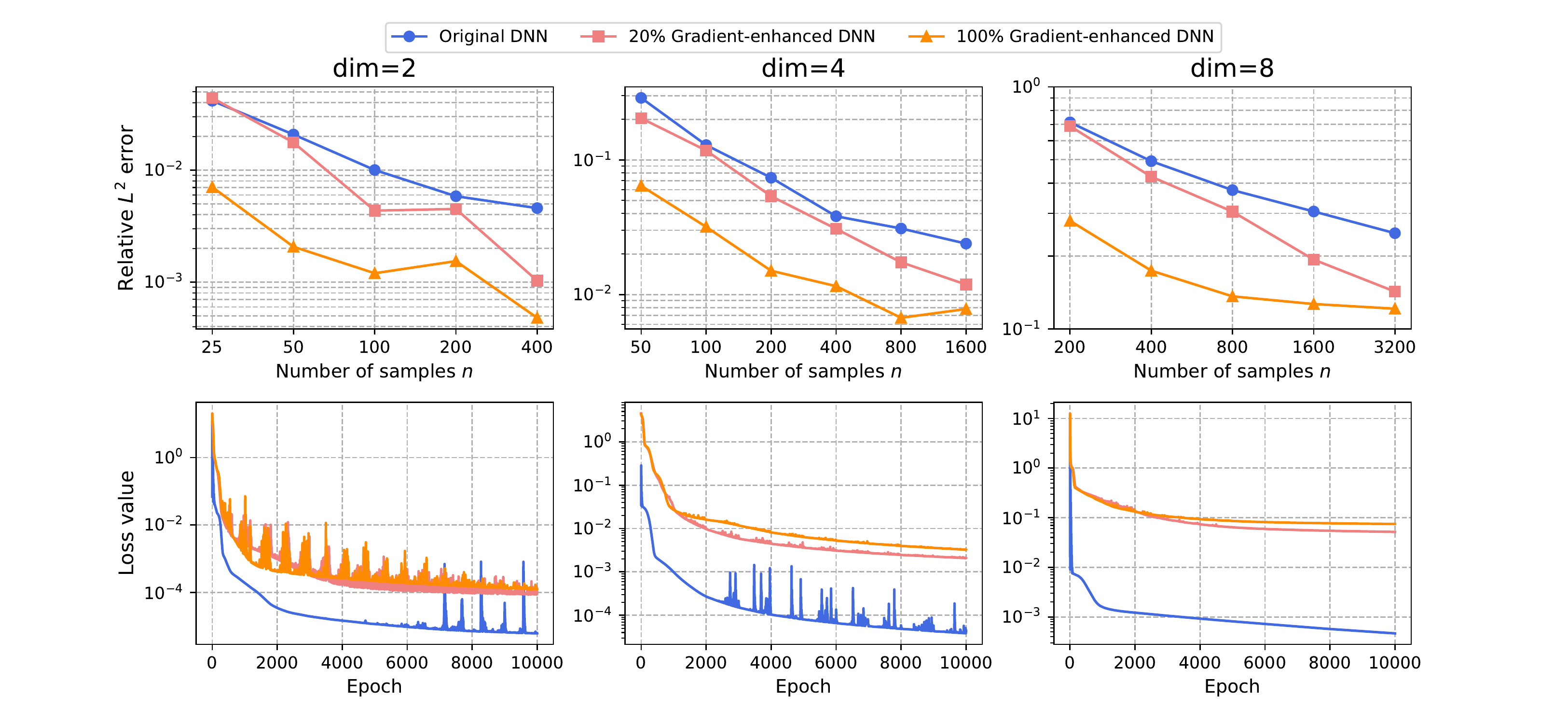}
 	\caption{Approximation of $f_1(\bm{x})$. Top: The relative $L^2$ errors against number of samples. Bottom: The loss functions against increasing epochs with the number of samples 400 for $d=2$, 1600 for $d=4$ and $3200$ for $d=8$.}
 	\label{fig:functionapproximation1}
 \end{figure}
 \begin{figure}
 	\centering
 	\includegraphics[scale=0.5]{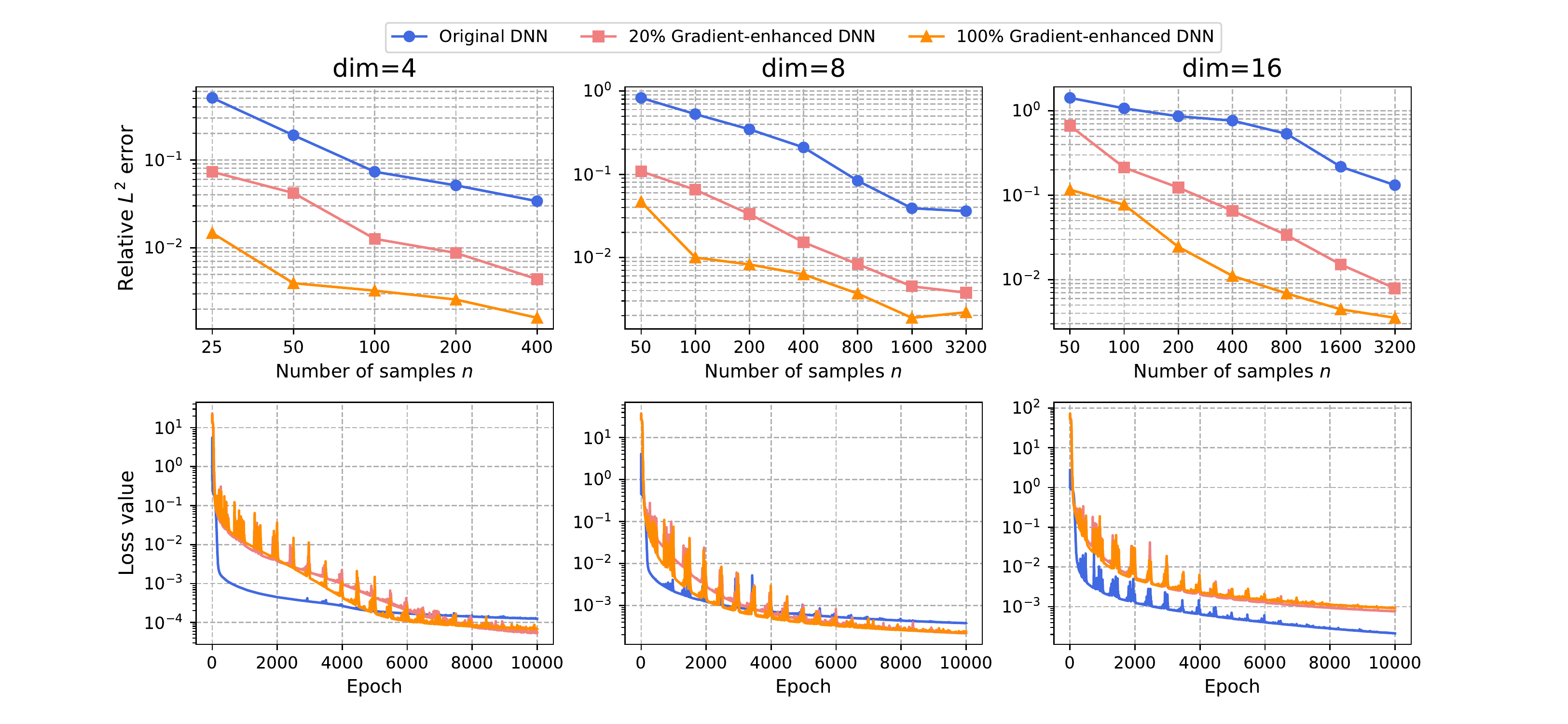}
 	\caption{Approximation of $f_2(\bm{x})$. Top: The relative $L^2$ errors against number of samples. Bottom: The loss functions against increasing epochs with the number of samples 400 for dim=4 and 3200 for dim=8, 16.}
 	\label{fig:functionapproximation2}
 \end{figure}

 For these two test functions, we assume that samples 
 $\{\bm{x}_i\}_{i=1}^n$ are uniformly distributed in $[-1,1]^d$, $y_i$ is the observation of target function at $\bm{x}_i$ and $\bm{y}_i'$ is the corresponding derivative. Thus $\{\bm{x}_i,y_i\}_{i=1}^n$ compose the training data for original DNN. $\{\bm{x}_i, y_i,\bm{y}'_i\}_{i=1}^n$ compose the training data for 100\% gradient-enhanced DNN. And $\{\bm{x}_i,y_i\}_{i=1}^n\cup\{\hat{\bm{x}}_j, \hat{\bm{y}}'_j\}^m_{j=1}$ compose the training data for 20\% gradient-enhanced DNN where $m$ is the rounding off of $20\%n$ and $\{\hat{\bm{x}}_j\}^m_{j=1}$ is randomly chosen from $\{\bm{x}_i\}_{i=1}^n$. The learning rate for Adam optimizer is set to 0.01 with 20\% decay each 500 steps.
 
 For Gaussian function $f_1(\bm{x})$, we consider the cases that $d=2,4,8$. 
 The relative $L^2$ errors against the number of samples $n$ are presented in top row of Fig. \ref{fig:functionapproximation1}. The use of gradient information can indeed improve the accuracy, and furthermore, the more gradient information is included, the better accuracy obtained. What's more, we investigate the loss functions of different models for $d=2$ with 400 samples, $d=4$ with 1600 samples and $d=8$ with 3200 samples, which are depicted in the bottom row of Fig. \ref{fig:functionapproximation1}. 

 For the polynomial function $f_2(\bm{x})$, we set the dimension to 4, 8, and 16. Similar to Gaussian function, we present the relative $L^2$ errors for different dimensions in Fig. \ref{fig:functionapproximation2}, which again show that the gradient information regularized term can greatly enhance the approximation accuracy. The loss functions for $d=4$ with 400 samples and $d=8,16$ with 3200 samples are provided in the bottom row of  Fig. \ref{fig:functionapproximation2}. It can be observed that the loss function of gradient-enhanced methods may be smaller than original DNN as the iteration number increases, verifying the strength of gradient-enhanced methods.
 \subsubsection{Elliptic PDE with random inputs}
 We now consider the following stochastic elliptic PDE problem
 \begin{equation}
 	\label{stochastic_elliptic}
 	\left\{
 	\begin{aligned}
 		-\nabla\cdot (a(\bm{x},\omega)\nabla u(\bm{x},\omega))=f(\bm{x},\omega)\quad &\mbox{in } \mathcal{D}\times \Omega,\\
 		 u(\bm{x},\omega)=0\quad &\mbox{on }\partial \mathcal{D}\times \Omega,
 	\end{aligned}
 	\right.
 \end{equation}
where $\mathcal{D}=[0,1]^2, \bm{x}=(x_1,x_2)$ is a spatial coordinate, $f(\bm{x},\omega)$ is a deterministic force term $f(\bm{x},\omega)=\cos(x_1)\sin(x_2)$. The random diffusion coefficient $a(\bm{x},\omega)=a_{d}(\bm{x},\omega)$ with one-dimensional spatial dependence takes the form \cite{FN},
\begin{equation}
	\log(a_{d}(\bm{x},\omega)-0.5)=1+ Y_1(\omega)\left(\frac{\sqrt{\pi}L}{2}\right)^{1/2}+\sum_{k=2}^{d}\zeta_k \phi_k(\bm{x})Y_k(\omega),
\end{equation}
where
	\begin{equation*}
		\zeta_k \coloneqq (\sqrt{\pi}L)^{1/2}\exp\left( \frac{-\big(\lfloor \frac{k}{2}\rfloor \pi L\big)^2}{8}\right)\quad \mbox{if } k>1 \mbox{ and } L=\frac{1}{12},
	\end{equation*}
and $\phi_k(\bm{x})$ only depends on the first component of $\bm{x}$,
\begin{equation}
	\phi_k(\bm{x})\coloneqq \left\{
	\begin{array}{cc}
		\sin\left(\lfloor \frac{k}{2}\rfloor \pi x_1\right) & \mbox{if}\; k \; \mbox{even},\\
		\cos\left(\lfloor \frac{k}{2}\rfloor \pi x_1\right) & \mbox{if}\; k \; \mbox{odd}.
	\end{array}
	\right.
\end{equation}
Here $\{Y_k(\omega)\}_{k=1}^d $ are independent random variables uniformly distributed in the interval $[-1, 1]$. In the following we approximate the QoI $q$ defined by $q(\omega)=u((0.5, 0.5), \omega)$ which is the solution of equation \eqref{stochastic_elliptic} at location $\bm{x}=(0.5,0.5)$. Denote $\Psi(\omega)=((Y_1(\omega),\dots,Y_d(\omega))$. The derivatives $\mathrm{d}q/\mathrm{d}\Psi=\partial u(\bm{x},\omega)/\partial \Psi$ are computed by the adjoint sensitivity method. Both forward and adjoint solvers are implemented in the finite element method (FEM) project Fenics \cite{logg2012automated}. In numerical tests, $\{\Psi(\omega_i)\}_{i=1}^n$ are generated from a uniform distribution in $[-1,1]^d$, and we solve the forward PDE 320 times for $d=5$ and 1600 times for $d=10$. Notice that each partial derivative leads to an adjoint equation, then the number of adjoint equations needed to solve is $d$ times that of forward equations. It is worth mentioning that the cost of generating derivatives of $q(\omega)$ in elliptic PDE \eqref{stochastic_elliptic} is negligible since they share the same stiff matrix with $q$. 

After obtaining the function values as well as the corresponding gradient information, we apply the original DNN, 20\% gradient-enhanced DNN and 100\% gradient-enhanced DNN to approximate the QoI $q(\omega)$. The learning rate for Adam optimizer is 0.001 with half decay each 1000 steps. The relative $L^2$ errors for $d=5$, 10 are presented in the top row of 
Fig. \ref{fig:stochasticellipticuq}. We also provide the loss functions of different models for $d=5$ with the number of samples 320 and $d=10$ with the number of samples 1600 in the bottom row of Fig. \ref{fig:stochasticellipticuq}. All cases verify that gradient-enhanced methods significantly out-perform original approach. We can achieve the same accuracy using much fewer training samples.
 \begin{figure}[!h]
 	\centering
 	\includegraphics[scale=0.6]{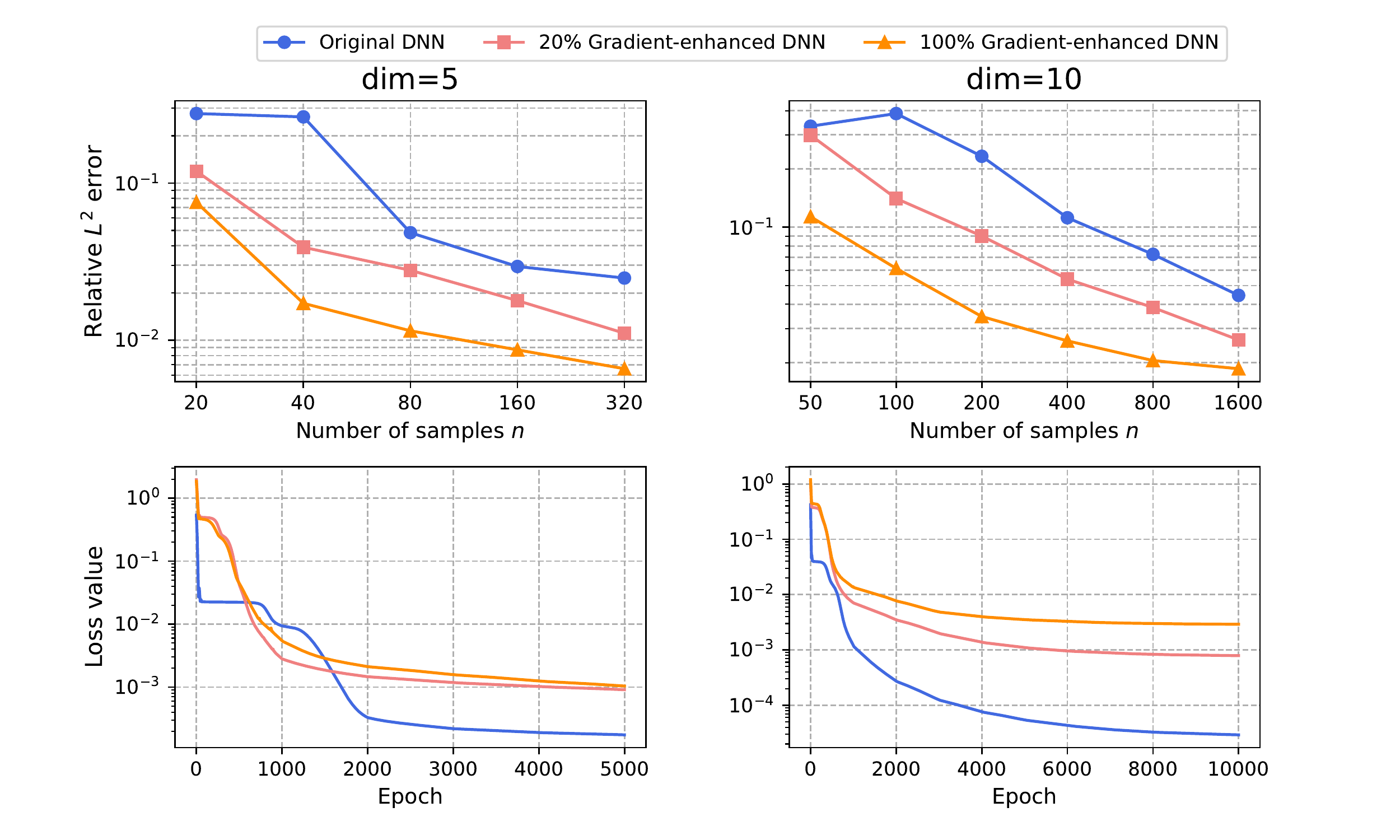}
 	\caption{Top: The relative $L^2$ errors against number of samples for $N=5, 10$. Bottom: The loss functions against increasing epochs for $d=5$ with 320 samples and $d=10$ with 1600 samples.}
 	\label{fig:stochasticellipticuq}
 \end{figure}

%
%
%

\section{Conclusion}
We have proposed gradient-enhanced deep neural networks (DNNs) approximations for function approximations and uncertainty quantification. In our approach, the gradient information is included as a regularization term. For this approach, we present similar posterior estimates (by the two-layer neural networks) as those in the path-norm regularized DNNs approximations. We also discuss the application of this approach to gradient-enhanced uncertainty quantification, and numerical experiments show that the proposed approach can outperform the traditional DNNs approach in many cases of interests. The discussion in this work is limited to supervised learning where labeled data are available, and in our future work, we shall consider to apply this gradient-enhanced idea to unsupervised learning where the physical equation is considered to yield the loss function.

\section*{Acknowledge}
We would like to thank Professor Tao Zhou of Chinese Academy of Sciences for bringing this topic to our attention and for his encouragement and helpful discussion.

\bibliographystyle{plainnat}

\end{document}